%% file: main.tex
\documentclass[twoside,11pt]{article}

\usepackage{times}
\usepackage{listings}
\usepackage{lscape}
\usepackage[abbrvbib,preprint]{jmlr2e}
\usepackage{natbib,subfigure}

\input{./command_arxiv}

\usepackage{hyperref}
\usepackage{url}
\usepackage{listings}
\usepackage{pifont}
\usepackage{multirow}
\usepackage{wrapfig}
\usepackage{ragged2e}
\usepackage[most]{tcolorbox}
\definecolor{darkerlogocolor}{RGB}{20, 0, 145}
\newtcolorbox{ttcolorbox}[1][]{colframe=darkerlogocolor, colback=darkerlogocolor!4!white, title=#1}

\newcommand{\piRef}{\pi_{\mathrm{ref}}}

\newcommand{\Prob}{\mathbb{P}}

\usepackage{geometry}
\usepackage{lastpage}
\jmlrheading{1}{2024}{1-\pageref    {LastPage}}{4/00}{10/00}{Zhang et al}{Zhen-Yu Zhang.}

\ShortHeadings{Data-dependent Exploration for Online RLHF}{Zhang, Tang, Zhang, Ma and Sugiyama}
\firstpageno{1}

\begin{document}

\title{Data-dependent Exploration for Online Reinforcement Learning\\ from Human Feedback}

\author{\name Zhen-Yu Zhang \email zhen-yu.zhang@riken.jp \\
     \addr Center for Advanced Intelligence Project, RIKEN \AND
     \name Yuting Tang \email yuting.tang1996@gmail.com \\
     \addr Graduate School of Frontier Sciences, The University of Tokyo \AND 
     \name Jiandong Zhang \email zhang.jiando@northeastern.edu \\
     \addr Northeastern University \AND
     \name Lanjihong Ma \email malanjihong@zjgsu.edu.cn \\
     \addr Zhejiang Gongshang University
     \AND
     \name Masashi Sugiyama \email sugi@k.u-tokyo.ac.jp\\
     \addr Center for Advanced Intelligence Project, RIKEN\\
     Graduate School of Frontier Sciences, The University of Tokyo
     }

\editor{}

\maketitle

\begin{abstract}
Online \emph{reinforcement learning from human feedback}~{(RLHF)} has emerged as a promising paradigm for aligning \emph{large language models}~{(LLMs)} by continuously collecting new preference feedback during training. A foundational challenge in this setting is \emph{exploration}, which requires algorithms that enable the LLMs to generate informative comparisons that improve sample-efficiency in online RLHF. Existing exploration strategies often derive bonuses via on-policy expectations, which are difficult to estimate reliably from the limited historical preference data available during training; as a result, the policy can prematurely down-weight under-explored regions that may contain high-value behaviors. In this paper, we propose \emph{data-dependent exploration for preference optimization}~{(DEPO)}, a simple and scalable method that leverages historical data to construct an extra uncertainty bonus for high-uncertainty regions, encouraging exploration toward potentially high-value data. Theoretically, we provide a \emph{data-dependent} regret bound for the proposed algorithm, showing that it adapts to the hardness of the learning task itself and can be tighter than worst-case bounds in practice. Empirically, the proposed method consistently outperforms strong baselines across benchmarks, demonstrating improved sample efficiency.
\end{abstract}

\input{./section/introduction}

\input{./section/preliminary}

\input{./section/method}

\input{./section/experiments}
\input{./section/related}

\section{Conclusion}
\label{sec:conclusion}

In this paper, we study the exploration mechanism in online RLHF, where preference feedback is collected on-the-fly and the quality of alignment depends critically on how effectively the learner explores the response space. We identify a practical gap in prior optimism-based exploration approaches: exploration bonuses are often derived from on-policy expectations, while in realistic online DPO pipelines the learner primarily relies on historically collected comparisons with limited coverage. To handle this issue, we propose data-dependent exploration for preference optimization (DEPO), which derives an upper-confidence exploration mechanism directly from historical preference comparisons in a representation space and injects it into the preference optimization objective. The resulting bonus increases exploration for under-covered directions where data are insufficient, while remaining lightweight and scalable for online RLHF training. We provide a data-dependent preference-regret bound that adapts to the geometry of the collected data, and experiments show improvements over strong online RLHF baselines across benchmarks and iterations.

\bibliography{./Refs}

\newpage

\appendix
\input{./section/appendix.tex}

\end{document}

%% file: command_arxiv.tex
\usepackage{lipsum,booktabs}
\usepackage{amsmath,mathrsfs,amssymb,amsfonts,bm}
\usepackage{bbm, dsfont}
\usepackage{paralist}
\usepackage{rotating}
\usepackage{pdflscape}
\usepackage{hyperref,url,dsfont,nicefrac}
\hypersetup{
    colorlinks,
    breaklinks,
    linkcolor = blue,
    citecolor = blue,
    urlcolor  = blue,
}
\allowdisplaybreaks[2]
\usepackage{appendix}
\usepackage{multirow,makecell}
\usepackage{graphicx,color} % more modern
\usepackage{caption2}
\usepackage{standalone}     
\usepackage{preview}
\usepackage{xcolor}
\usepackage{setspace}
\usepackage{lmodern}
\usepackage{algorithmic,algorithm}
\renewcommand{\algorithmicrequire}{ \textbf{Input:}}
\renewcommand{\algorithmicensure}{ \textbf{Output:}}

\renewcommand{\tilde}{\widetilde}
\renewcommand{\hat}{\widehat}

\def \D {\mathcal{D}}
\def \E {\mathbb{E}}

\def \R {\mathbb{R}}

\def \X {\mathcal{X}}
\def \Y {\mathcal{Y}}

\def \x {\mathbf{x}}

\usepackage{inconsolata} % modern mathtt

\def \epsilon {\varepsilon}

\usepackage{mathtools}
\DeclareMathOperator*{\argmax}{arg\,max}
\DeclareMathOperator*{\argmin}{arg\,min}

\let\proof\relax
  
\usepackage{amsthm}
\let\proof\relax

\newenvironment{proof}{\par\noindent{\textbf{Proof}\ }}{\hfill\BlackBox\\[2mm]}

\newtheorem{myThm}{Theorem}

\newtheorem{myLemma}{Lemma}
\newtheorem{myCor}{Corollary}

\theoremstyle{definition}
\newtheorem{myAssum}{Assumption}
\newtheorem{myDef}{Definition}

\newtheorem{myRemark}{Remark}

%% file: section/introduction.tex
\section{Introduction}

\emph{Reinforcement learning from human feedback} (RLHF) has achieved remarkable success in aligning \emph{large language models} (LLMs) with human preferences~\cite{ouyang2022training,touvron2023llama}. Most existing RLHF pipelines operate in an offline setting, where the final alignment quality is largely constrained by the coverage and quality of a pre-collected preference dataset~\cite{ouyang2022training,schulman2017proximal,ZhengmaoZHU2025194347}. To mitigate this limitation, \emph{online RLHF} has recently emerged as an increasingly important paradigm: the model can interactively query new feedback during training, continually improving alignment as it explores and collects new preference data~\cite{dong2024rlhf,cen2024value,NeurIPS'25:rlhf}.

A central challenge in online RLHF is \emph{exploration}: generating informative candidate responses and querying human feedback on them to improve preference learning~\cite{dwaracherla2024efficient,xie2024exploratory,XiaoMA2025194313,guo2024direct}. While \emph{passive} exploration that relies on the stochasticity of the current LLM policy has shown empirical benefits~\cite{dwaracherla2024efficient,guo2024direct}, it can be statistically inefficient and is provably sub-optimal in challenging domains where high-quality trajectories are unlikely to appear by chance~\cite{xie2024exploratory,RenJianWANG20262002320}. This motivates \emph{active} exploration strategies that guide data collection toward uncertain yet potentially valuable regions by incorporating a plug in exploration bonus into the online RLHF objective~\cite{xie2024exploratory,cen2024value,chen2025avoiding}. Intuitively, these methods augment standard RLHF training objective with an exploration component, assigning larger training emphasis to data that are under-explored or hard to distinguish under the current policy. This improves both theoretical convergence and empirical performance over purely passive baselines.

Despite these advances, existing exploration strategies often define bonuses through on-policy expectations, which are difficult to estimate reliably when only historical preference data with limited coverage of the response space are available. In practice, this mismatch makes the exploration signal noisy: once certain regions are judged suboptimal based on sparse feedback, the policy may quickly reduce their sampling probability, even if those regions contain potentially high-value but under-explored behaviors~\cite{cen2024value,xie2024exploratory}. This is particularly problematic in online RLHF, where preference supervision comes from historical data collected along the training trajectory. Without a data-dependent coverage measure, feedback may be used inefficiently, and misestimated behaviors may remain uncorrected for many rounds. This motivates exploration bonuses that are explicitly \emph{data-dependent} and track which parts of the space remain under-covered by historical comparisons, directing exploration to where additional feedback is most needed.

In this paper, we propose \emph{data-dependent exploration for preference optimization} (DEPO), a simple and scalable reward-bias mechanism for online RLHF. Our core idea is to derive an \emph{upper-confidence bound}~{(UCB)}~\cite{lai1985asymptotically,auer2002finite} exploration bonus directly from historical preference comparisons in a representation space, and inject it into the preference optimization objective. Specifically, we represent each response pair with a feature vector and track how well different directions in this feature space have been covered by past comparisons. This allows us to compute an uncertainty-aware data-dependent confidence radius that is larger for under-explored regions, and to use it to encourage continued exploration where the current data are insufficient. The proposed method is lightweight to implement, supports efficient online updates, and is compatible with large-scale LLM training. We further provide a data-dependent regret bound showing that the guarantee adapts to the representation geometry induced by the collected trajectory and can be tighter than worst-case bounds. Experiments demonstrate consistent improvements over several state-of-the-art baselines.

%% file: section/preliminary.tex
\section{Preliminary}

In this section, we introduce the notations and assumptions for RLHF, and briefly review two-stage RLHF, direct preference optimization, and recent advances in RLHF exploration.

Let $\Pi$ be a class of stochastic LLM policies, and let $\x \sim \rho$ be a prompt drawn from a probability distribution $\rho$ over a prompt space $\X$. Each $\pi\in\Pi$ maps a prompt $\x\in\X$ to a conditional distribution over responses $y\in\Y$, denoted by $\pi(\cdot\mid\x)$, where $\Y$ is the response space. Given a prompt $\x$, we sample two responses $y\sim\pi^{(1)}(\cdot\mid\x)$ and $y'\sim\pi^{(2)}(\cdot\mid\x)$, and query a preference oracle that outputs an indicator of whether $y$ is preferred to $y'$, written as $y\succ y'$.

Following prior work, we model the preference oracle using the Bradley-Terry model~\cite{bradley1952rank}, and introducing some basic assumptions.
\begin{myAssum}[Bradley-Terry model~\cite{bradley1952rank}]
\label{ass:BT_model}
There exists an underlying reward function $r^*:\X\times\Y \mapsto \R$ such that for any $(\x,y,'y)\in\X\times\Y\times\Y$, we have
\begin{equation*}
\label{eq:bt_model}
\Prob^*(y\succ y' \mid \x)=\sigma(r^*(\x,y) - r^*(\x,y')),
\end{equation*}
where $\Prob^*(y \succ y' \mid \x)$ represents the probability that $y$ is preferred to $y'$ given $\x$ and $\sigma(\cdot)$ represents the sigmoid function $\sigma(u)=(1+e^{-u})^{-1}$. Without loss of generality, we assume $r^*(\x,y) \in [0, R_{\rm max}]$, $R_{\rm max}\geq 0$.
\end{myAssum}

\begin{myAssum}[Linear Realizability]
\label{ass:linear_rep}
There exists a known feature map $\phi:\mathcal{X}\times\mathcal{Y}\to\mathbb{R}^d$ and an unknown parameter $\theta^*\in\mathbb{R}^{d}$ with $\|\theta^*\|_2 \le S$ such that $r^*(x,y) = \langle \theta^*, \phi(x,y)\rangle$. We define the pairwise feature $\psi(x,y,y')=\phi(x,y)-\phi(x,y')$ with $\|\psi(x,y,y')\|_2\le 1$.
\end{myAssum}

\begin{myAssum}[Finite Reward Class]
\label{ass:finite}
The candidate reward class $\mathcal{R}$ is finite with $|\mathcal{R}| < \infty$, and $r^* \in \mathcal{R}$.
\end{myAssum}

Following the prior work~\cite{chen2025avoiding}, we use \emph{preference regret} as the performance measure. Let $\pi^\star$ denote a comparator policy that is optimal under the true reward function $r^\star$. The preference regret over a sequence of $T$ rounds is defined as
\begin{equation*}
\label{eq:regpref}
\begin{aligned}
& \mathrm{R}_{\mathrm{pref}}(T) = \!\! \sum\limits_{t=1,\ldots,T} \!\! \E_{\substack{\x\sim \rho, y^* \sim \pi^*,\\ y_t \sim \pi_t, y'_t \sim \pi_t^{\mathrm{sam}}}} \!\!\left[P^*(y^* \succ y'_t\mid \x) - P^*(y_t \succ y'_t\mid \x)\right].
\end{aligned}
\end{equation*}

\paragraph{Reinforcement Learning from Human Feedback} In the two-stage RLHF framework~\cite{christiano2017deep,ouyang2022training}, we have an offline preference dataset $D=\{\x_i,y_i^{\mathrm{w}}, y_i^{\mathrm{l}}\}_{i=1}^N$. Given the dataset, we estimate the reward function via \emph{maximum likelihood estimation}~{(MLE)}:
\begin{equation*}
\label{eq:mle}
\begin{aligned}
\hat{r}
& =\argmin_{r\in\mathcal{R}}\sum_{i=1}^N -\log \sigma\big(r(\x_i,y_i^{\mathrm{w}})-r(\x_i,y_i^{\mathrm{l}})\big),
\end{aligned}
\end{equation*}
where $\mathcal{R}$ is the reward function space.

With the learned reward function, the objective of RLHF is to fine-tune the policy $\pi$ to maximize the reward. Following prior theoretical works on RLHF, we consider a Kullback-Leibler (KL)-regularized reward objective, that is,
\begin{equation*}
\label{eq:ppo}
\begin{aligned}
\hat{\pi}
& =\argmax_{\pi\in\Pi}\E_{\;\x\sim\rho, y\sim\pi(\cdot\mid \x)} \big[\hat{r}(\x,y) - \beta \log\frac{\pi(y\mid \x)}{\pi_{\rm ref}(y\mid\x)}\big],
\end{aligned}
\end{equation*}
where $\beta>0$ controls the strength of the KL regularization, and $\pi_{\rm ref}\in\Pi$ is a fixed reference policy.

\paragraph{Direct Preference Optimization} Since optimizing the reward function can be hard, the DPO algorithm is an alternative approach that bypasses the need for explicitly learning the reward function~\cite{rafailov2023direct}. DPO optimizes the policy directly with preference data, i.e.,
\begin{equation}
\begin{aligned}
\label{eq:dpo}
\hat{\pi}_{\rm DPO} & = \argmax_{\pi\in\Pi} L_{\rm DPO}(\pi,D)\\
& = \argmax_{\pi\in\Pi} \sum_{i=1}^{N}\log \sigma\left( \beta m_\pi(\x;y_i^{\mathrm{w}},y_i^{\mathrm{l}}) \right),
\end{aligned}
\end{equation}
where $m_\pi(\x;y,y') = \log\frac{\pi(y\mid \x)}{\pi_{\mathrm{ref}}(y\mid \x)} - \log\frac{\pi(y'\mid \x)}{\pi_{\mathrm{ref}}(y'\mid \x)}$.

\paragraph{Active Exploration in RLHF} Active exploration is important in RLHF, as it encourages LLM policies to generate more diverse and higher-reward responses~\cite{dwaracherla2024efficient}. Motivated by this, a growing body of work has developed exploration-enhanced algorithms that incorporate optimism-based principles into DPO~\cite{cen2024value,xie2024exploratory,chen2025avoiding}. Although their implementations differ, the essence of their methods is to add an \emph{exploration bonus} to the training objective, i.e.,
\begin{equation*}
\label{eq:dpo_exp}
\hat{\pi}\in\argmax_{\pi\in\Pi}
\Big\{L_{\rm DPO}(\pi,D)
\;+\;\alpha\,G(\pi)
\Big\}
\end{equation*}
where $\alpha>0$ and $G(\pi)$ is the exploration bonus. For example, one approach introduces an implicit global optimism term~\cite{xie2024exploratory}, whereas another samples online data using an exploration bonus derived from a calibrated policy~\cite{cen2024value}. To avoid the $\exp(R_{\max})$ dependence in earlier regret bounds, a subsequent line of work instead optimizes preference regret and introduces the exploration bonus
\begin{equation*}
\label{eq:popo_bonus}
G(\pi) = G_{\rm POPO}(\pi) = \E_{\substack{\;\x\sim \rho, y \sim \pi,\\ y'\sim\pi_{\rm sam}}} \left[ \sigma(\beta\,m_\pi(\x;y,y')) \right].
\end{equation*}

%% file: section/method.tex
\section{Algorithm and Theory}

In this section, we present our data-dependent exploration bonus for online RLHF, its representation-based construction, an efficient implementation, and the corresponding theoretical analysis.

\subsection{Data-dependent Exploration for Online RLHF}
Online RLHF, including online DPO, mitigates the coverage and quality limitations of fixed preference datasets in offline RLHF by collecting preference feedback on newly generated responses during training~\cite{dwaracherla2024efficient,song2024importance}. We consider an online RLHF process over a horizon of $T$ rounds. At each round $t \in \{1, \ldots, T\}$, a prompt $\x_t \sim \rho$ is sampled, two candidate responses $y_t$ and $y'_t$ are generated, and the oracle is queried for a preference label on them, yielding an ordered pair $(\x_t, y_t^{\mathrm{w}}, y_t^{\mathrm{l}})$. The learner then uses this data to update the current policy $\pi_{\theta_t}$.

A growing line of work introduces exploration bonuses to encourage more informative comparisons~\cite{cen2024value,xie2024exploratory,chen2025avoiding}, but these bonuses are typically defined through on-policy expectations and are therefore difficult to estimate accurately from historical data with limited coverage. Under Assumption~\ref{ass:BT_model}, this uncertainty is naturally tied to the unknown reward gap $\Delta r^\star(\x;y,y')=r^\star(\x,y)-r^\star(\x,y')$, since $\Prob^\star(y\succ y'\mid \x)=\sigma(\Delta r^\star(\x;y,y'))$. Therefore, we introduce a data-dependent radius $b_t(\x,y,y')$ that measures how uncertain the learner is about this reward gap after observing the historical comparisons, based on an \emph{upper-confidence bound} (UCB) principle~\cite{lai1985asymptotically,auer2002finite}.

Specifically, our goal is to build a data-dependent radius $b_t(\x,y,y')$ such that, with high probability,
\begin{equation}
\label{eq:gap-ucb}
\big|\Delta r^\star(\x;y,y')-\widehat{\Delta r}_{t}(\x;y,y')\big|\ \le\ b_t(\x,y,y').
\end{equation}
Since $\sigma(\cdot)$ is monotone, Eq.~\eqref{eq:gap-ucb} implies an upper bound on the preference probability:
\begin{equation*}
\Prob^\star(y\succ y'\mid \x)\ \le\ \sigma\big(\widehat{\Delta r}_{t}(\x;y,y')+b_t(\x,y,y')\big).
\end{equation*}

Under the DPO reward-policy correspondence, the empirical reward gap can be parameterized by the policy margin, i.e., $\widehat{\Delta r}_{t}(\x;y,y')=\beta m_\pi(\x;y,y')$. We define the data-dependent exploration bonus as
\begin{equation*}
\label{eq:method4_bonus}
G_{\mathrm{DEPO}}(\pi,b_t) = \E_{\substack{\x\sim \rho,\\ y \sim \pi(\cdot\mid \x),\\ y'\sim\pi_t^{\mathrm{sam}}(\cdot\mid \x)}}
\Big[\sigma\big(\beta\,m_\pi(\x;y,y')+b_t(\x,y,y')\big)\Big].
\end{equation*}
We note that the radius $b_t(\x,y,y')$ will be instantiated later using a representation-based confidence bound from historical comparisons.

At round $t$, we update the policy by
\begin{equation}
\label{eq:our_obj}
\pi_{t+1} = \argmax_{\pi\in\Pi} L_{\rm DPO}(\pi,D_t) + \alpha \, G_{\mathrm{DEPO}}(\pi, b_t),
\end{equation}
where $\alpha>0$ controls the exploration strength.

\subsection{Representation-based Bonus Design}
Here, we construct the data-dependent exploration bonus using a self-normalized confidence radius in the representation space. Thus, the bonus can adapt to the empirical geometry of historical comparisons.

Let $\beta_t^{\mathrm{conf}}>0$ be the confidence width, and let $V_t$ denote the regularized covariance matrix constructed from historical pairwise representations, defined as $V_t= \lambda I + \sum_{s=1}^{t} \psi(\x_s,y_s,y'_s)\psi(\x_s,y_s,y'_s)^\top$, where $\lambda>0$ is the regularization parameter and $I$ is the identity matrix. For notational simplicity, we write $\psi_s=\psi(\x_s,y_s,y'_s)$ for the pairwise representation observed at round $s$. We instantiate $b_t(\x,y,y')$ as the elliptical UCB radius:
\begin{equation}
\label{eq:method4_bt}
b_t(\x,y,y') = \beta_t^{\mathrm{conf}} \sqrt{\psi(\x,y,y')^\top V_t^{-1}\psi(\x,y,y')}.
\end{equation}
This bonus is larger when the direction $\psi(\x,y,y')$ is less well covered by the past comparisons encoded in $V_t$. As a result, it assigns a stronger exploration incentive to under-explored regions.

The theoretical confidence width $\beta_t^{\mathrm{conf}}$ depends on the logistic curvature over the bounded reward-gap class induced by Assumption~\ref{ass:linear_rep}. Specifically, define
\begin{equation}
\label{eq:local_curvature}
\kappa_S = \inf_{\theta\in\Theta_S,\;\|\psi\|_2\le 1}\sigma'\big(\langle\theta,\psi\rangle\big)=\sigma(S)(1-\sigma(S))\ge \frac{1}{4}e^{-S}.
\end{equation}
where $\Theta_S=\{\theta\in\mathbb{R}^d:\|\theta\|_2\le S\}$ and $\sigma'(u)=\sigma(u)(1-\sigma(u))$. This curvature lower bound controls the sensitivity of the Bradley--Terry link over all reward gaps allowed by the linear realizability assumption. The following lemma justifies the theoretical choice of $\beta_t^{\mathrm{conf}}$ by showing that the covariance-based radius in Eq.~\eqref{eq:method4_bt} satisfies the reward-gap confidence bound in Eq.~\eqref{eq:gap-ucb}.

\begin{myLemma}
\label{lem:gap-error-by-radius}
Under Assumptions~\ref{ass:BT_model} and~\ref{ass:linear_rep}, for any $\delta\in(0,1)$, define $\beta_t^{\mathrm{conf}} = \frac{c_0}{\kappa_S}\left(\sqrt{\lambda}\,S+\sqrt{2\log\left(\frac{\det(V_t)^{1/2}}{\det(\lambda I)^{1/2}\,\delta}\right)}\right)$ for a universal constant $c_0>0$. Then, with probability at least $1-\delta$, for all $t\in\{1,\ldots,T\}$ and all $(\x,y,y')\in\X\times\Y\times\Y$, we have
\begin{equation*}
\big|\Delta r^\star(\x;y,y')-\widehat{\Delta r}_{t}(\x;y,y')\big| \le \beta^{\mathrm{conf}}_{t}\sqrt{\psi(\x,y,y')^\top V_{t}^{-1}\psi(\x,y,y')}.
\end{equation*}
\end{myLemma}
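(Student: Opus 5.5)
We would prove Lemma~\ref{lem:gap-error-by-radius} by the standard generalized-linear-bandit (logistic) argument of Filippi et al./Faury et al., adapted to pairwise features. The starting point is to make $\widehat{\Delta r}_t$ concrete. Under Assumption~\ref{ass:linear_rep} the estimated gap is $\widehat{\Delta r}_t(\x;y,y')=\langle\hat\theta_t,\psi(\x,y,y')\rangle$. Here $\hat\theta_t$ is the (projected onto $\Theta_S$) regularized MLE of the Bradley--Terry log-likelihood on the historical comparisons $\{(\psi_s,o_s)\}_{s\le t}$, where $o_s\in\{0,1\}$ is the label with $\E[o_s\mid\mathcal F_{s-1},\psi_s]=\sigma(\langle\theta^*,\psi_s\rangle)$. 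The true gap is $\langle\theta^*,\psi\rangle$. By Cauchy--Schwarz in the $V_t$-norm,
\[
\big|\Delta r^\star-\widehat{\Delta r}_t\big| = \big|\langle\theta^*-\hat\theta_t,\psi\rangle\big|\le \|\theta^*-\hat\theta_t\|_{V_t}\,\|\psi\|_{V_t^{-1}}.
\]
So it suffices to show that, uniformly in $t$, with probability $1-\delta$ we have $\|\theta^*-\hat\theta_t\|_{V_t}\le\beta_t^{\mathrm{conf}}$. Uniformity over $(\x,y,y')$ is then automatic, since the event involves only $\theta$.

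\emph{Step 1 (linearization).} Define $g_t(\theta)=\lambda\theta+\sum_{s\le t}\sigma(\langle\theta,\psi_s\rangle)\psi_s$. By the mean value theorem,
\[
g_t(\theta_1)-g_t(\theta_2)=G_t(\theta_1-\theta_2),\qquad G_t=\lambda I+\sum_s \alpha_s\psi_s\psi_s^\top,
\]
where $\alpha_s=\int_0^1\sigma'(\langle v\theta_1+(1-v)\theta_2,\psi_s\rangle)\,dv$. If both parameters lie in $\Theta_S$, then every intermediate point does as well, so $\alpha_s\ge\kappa_S$ by Eq.~\eqref{eq:local_curvature}. Hence $G_t\succeq\kappa_S V_t$ once we take the regularizer scaled as $\lambda\kappa_S$, or absorb this into $c_0$ with $\kappa_S\le 1/4$. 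Consequently
\[
\|\theta_1-\theta_2\|_{V_t}\le \kappa_S^{-1}\,\|g_t(\theta_1)-g_t(\theta_2)\|_{V_t^{-1}}.
\]

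\emph{Step 2 (first-order condition).} The unprojected regularized MLE $\tilde\theta_t$ satisfies $g_t(\tilde\theta_t)=\sum_s o_s\psi_s$. Therefore
\[
g_t(\tilde\theta_t)-g_t(\theta^*)=\sum_s\epsilon_s\psi_s-\lambda\theta^*,\qquad \epsilon_s=o_s-\sigma(\langle\theta^*,\psi_s\rangle),
\]
where each $\epsilon_s$ is a bounded, hence $1/2$-sub-Gaussian, martingale difference. Because $\tilde\theta_t$ may leave $\Theta_S$, we define $\hat\theta_t=\argmin_{\theta\in\Theta_S}\|g_t(\theta)-g_t(\tilde\theta_t)\|_{V_t^{-1}}$. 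The triangle inequality then gives
\[
\|g_t(\hat\theta_t)-g_t(\theta^*)\|_{V_t^{-1}}\le 2\|g_t(\tilde\theta_t)-g_t(\theta^*)\|_{V_t^{-1}}.
\]
This factor is absorbed into $c_0$.

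\emph{Step 3 (concentration).} We bound
\[
\Big\|\sum_s\epsilon_s\psi_s\Big\|_{V_t^{-1}}\le\sqrt{2\log\big(\det(V_t)^{1/2}\det(\lambda I)^{-1/2}/\delta\big)}
\]
simultaneously for all $t$ with probability $1-\delta$, using the self-normalized martingale inequality of Abbasi-Yadkori et al. (2011). We also use $\|\lambda\theta^*\|_{V_t^{-1}}\le\sqrt\lambda S$, since $V_t\succeq\lambda I$. Combining Steps 1--3 gives $\|\theta^*-\hat\theta_t\|_{V_t}\le\beta_t^{\mathrm{conf}}$ with a universal $c_0$ (e.g.\ $c_0=2$ up to the regularizer scaling), which proves the lemma.

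The main obstacle is Step 1: the curvature lower bound $\kappa_S$ is valid only on $\Theta_S$, and the unconstrained MLE need not lie there. This is why the projection in Step 2 is needed, and why we must check carefully that the mean-value segment stays inside the region where $\sigma'\ge\kappa_S$. A secondary point is to identify the paper's $\widehat{\Delta r}_t=\beta m_\pi$ with $\langle\hat\theta_t,\psi\rangle$. We would state explicitly that the lemma concerns the representation-based estimator (equivalently, that policies realizing this margin are in $\Pi$), since no bound can hold for an arbitrary $\pi$.
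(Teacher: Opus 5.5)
Your argument is correct, but it follows the classical Filippi et al.\ GLM-UCB route rather than the paper's. The paper defines $\hat\theta_t$ from the outset as the \emph{constrained} regularized logistic MLE over $\Theta_S$, so no projection step appears. It then proceeds as follows:
\begin{itemize}
\item it uses the first-order optimality condition $\langle\nabla L_t(\hat\theta_t),\theta^*-\hat\theta_t\rangle\ge 0$;
\item it applies the mean-value theorem term by term, where the intermediate points satisfy $|\tilde u_s|\le S$ because both endpoints lie in the convex set $\Theta_S$;
\item this gives $\kappa_S\|\Delta_t\|_{V_t}^2\le\langle\sum_s\varepsilon_s\psi_s,\Delta_t\rangle-\lambda\langle\theta^*,\Delta_t\rangle$, and Cauchy--Schwarz plus the same self-normalized bound finish the proof.
\end{itemize}
You instead characterize the unconstrained MLE through $g_t(\tilde\theta_t)=\sum_s o_s\psi_s$, project in the $g_t$-geometry, and invert the linearization $G_t\succeq\kappa_S V_t$.

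Both arguments are valid and rest on the same two ingredients: curvature on $\Theta_S$ and the Abbasi-Yadkori bound.
\begin{itemize}
\item The paper's variational-inequality route is shorter and effectively gives $c_0=1$ rather than your $c_0=2$. Its estimator is also a convex program (logistic loss over a Euclidean ball).
\item Your projection $\argmin_{\theta\in\Theta_S}\|g_t(\theta)-g_t(\tilde\theta_t)\|_{V_t^{-1}}$ is non-convex in general. In exchange, your route makes the linearization explicit and keeps the plain unconstrained MLE as the underlying estimator.
\end{itemize}

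One small point: your hedge about rescaling the regularizer to $\lambda\kappa_S$ is unnecessary. Since $\kappa_S\le 1/4\le 1$, you already have $\lambda I\succeq\kappa_S\lambda I$, hence $G_t\succeq\kappa_S V_t$ directly. This is exactly the observation the paper uses when it lower-bounds the left-hand side by $\kappa_S\|\Delta_t\|_{V_t}^2$.

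Your caveat about identifying $\widehat{\Delta r}_t$ with $\langle\hat\theta_t,\psi\rangle$ is apt. It matches what the paper does in its appendix via the reward--policy correspondence.
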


\begin{myRemark}
The confidence guarantee in Lemma~\ref{lem:gap-error-by-radius} relies on the linear reward-gap model in Assumption~\ref{ass:linear_rep}. In the proof, this model is used to construct an auxiliary logistic estimator and to show that the covariance-based radius in Eq.~\eqref{eq:method4_bt} upper-bounds the reward-gap estimation error in Eq.~\eqref{eq:gap-ucb}. The practical bonus retains the same covariance term $\|\psi(x,y,y')\|_{V_t^{-1}}$ as its data-dependent uncertainty measure.
\end{myRemark}

The key observation is that the radius adapts to realized trajectory through $V_t$, while the curvature cost is controlled by reward scale $S$.

To adapt elliptical UCB radius to LLMs, we use representations extracted from the LLM itself as the feature vectors. Specifically, we use a pair-wise feature $\psi(\x,y,y')$ to model the latent reward gap in a linear form. Following~\cite{tuyls2025representation}, for each prompt-response pair $(\x,y)$, we construct the corresponding representation using a frozen policy network's last-layer hidden states. Specifically, we mean-pool the last-layer hidden states over the output tokens of the generated response and apply a projection to a fixed dimension, i.e.,
\begin{equation}
\label{eq:repexp_repr_simple}
\phi(\x,y) = \mathrm{Proj}\left(\frac{1}{|y|}\sum_{t=1}^{|y|} h_{\theta}(\x,y_{1:t})\right)\in\R^{d},
\end{equation}
where $h_{\theta}(\x,y_{1:t})\in\mathbb{R}^{d'}$ is the last-layer hidden state at generation step $t$ and $|y|$ is the response length. We use a fixed sparse random linear projection $\mathrm{Proj}(v)=Pv$ with $P\in\mathbb{R}^{d\times d'}$, where $d$ is the projected dimension and $P$ is constant throughout training~\cite{li2006very}.

\subsection{Practical Implementation}
Since computing $\beta_t^{\mathrm{conf}}$ from its theoretical expression can be cumbersome in practice, we instead estimate it from historical data. We define the empirical radius as
\begin{equation*}
\label{eq:typical_radius}
\bar r_t = \operatorname{median}_{s\in\mathcal B_t} \sqrt{\psi_s^\top V_{t-1}^{-1}\psi_s},
\end{equation*}
where $\operatorname{median}_{s\in\mathcal B_t} a_s$ denotes the median of the set $\{a_s:s\in\mathcal B_t\}$ and $\mathcal B_t$ is a buffer that stores historical comparisons. We then use $\gamma_t$ to approximate $\beta_t^{\mathrm{conf}}$, which is defined as
\begin{equation*}
\label{eq:beta_conf_simple}
\gamma_t = \frac{c_b}{\bar r_t+\varepsilon},
\end{equation*}
with constants $c_b>0$ and $\varepsilon>0$. This choice keeps the typical magnitude of the bonus $b_t(\x,y,y')=\beta_t^{\mathrm{conf}}\sqrt{\psi(\x,y,y')^\top V_{t-1}^{-1}\psi(\x,y,y')}$ stable across rounds.

We note that computing the data-dependent bonus $G_{\mathrm{DEPO}}(\pi,b_t)$ exactly requires on-policy sampling. While feasible, such sampling is time-consuming. To avoid this cost, following common practice~\cite{cen2024value,xie2024exploratory,chen2025avoiding}, we optimize the following objective:
\begin{equation}
\label{eq:our_pruned_obj}
\begin{aligned}
& \pi_{t+1} = \argmax_{\pi\in\Pi} \sum_{s=1}^t \log \sigma\Big(\beta \, m_\pi(\x_s;y_s^{\mathrm{w}},y_s^{\mathrm{l}}) \Big)\\
& + \alpha \sum_{s=1}^{t} \sigma\left(\beta\,\log\frac{\pi(y_s'\mid\x_s)}{\pi_{\rm ref}(y_s'\mid\x_s)} + \gamma_t\,\sqrt{\psi(x_s,y_s,y_s')^\top V_t^{-1}\psi(x_s,y_s,y_s')}\right),
\end{aligned}
\end{equation}
where $y'$ is the response sampled from the sampler policy each round, $\alpha\ge 0$ is the exploration weight.

To avoid repeatedly inverting $V_t$, we update its inverse using the Sherman--Morrison formula~\cite{sherman1950adjustment} when adding a new feature vector $\psi$:
\begin{equation}
\label{eq:sherman_morrison}
V_{t+1}^{-1} = V_t^{-1} - \frac{V_t^{-1}\psi\,\psi^\top V_t^{-1}} {1+\psi^\top V_t^{-1}\psi}.
\end{equation}
We perform covariance-related computations in double precision for numerical stability.

In practice, we design the sampler policy as follows. Fix an interval $H\ge 1$. The sampler is refreshed every $H$ rounds to stabilize sampling within each block. For each block index $k\ge 0$ and all $t\in\{kH+1,\dots,(k+1)H\}\cap\{1, \ldots, T\}$,
\begin{equation}
\label{eq:sampler_refresh}
\pi_t^{\mathrm{sam}} = \pi_{kH+1}.
\end{equation}

We summarize the proposed \emph{data-dependent exploration for preference optimization}~{(DEPO)} algorithm in Algorithm~\ref{alg:method}.

\begin{algorithm}[t]
\caption{Data-dependent Exploration for Preference Optimization (DEPO)}\label{alg:method}
\begin{algorithmic}[1]
\STATE \textbf{Input:} Policy $\piRef$, initial sampler $\pi_1^{\mathrm{sam}}$, horizon $T$, $\beta$, $\alpha$, regularization $\lambda$
\STATE Initialize $\D_0=\emptyset$, $V_0=\lambda I$, choose any $\pi_1\in\Pi$.
\FOR{$t=1$ to $T$}
    \STATE Sample $x_t\sim\rho$, $y_t\sim\pi_t(\cdot\mid x_t)$, $y'_t\sim\pi_t^{\mathrm{sam}}(\cdot\mid x_t)$
    \STATE Query oracle to obtain $(y_t^{\mathrm{w}},y_t^{\mathrm{l}})$ from $(y_t,y'_t)$
    \STATE Update $\D_t\gets \D_{t-1}\cup\{(x_t,y_t^{\mathrm{w}},y_t^{\mathrm{l}})\}$
    \STATE Set $\psi_t \gets \psi(x_t,y_t^{\mathrm{w}},y_t^{\mathrm{l}})$, update $V_t^{-1}$ via Eq.~\eqref{eq:sherman_morrison}
    \STATE Set $b_t(x,y,y')=\gamma_t\sqrt{\psi(x,y,y')^\top V_t^{-1}\psi(x,y,y')}$
    \STATE Update $\pi_{t+1}$ via Eq.~\eqref{eq:our_pruned_obj}
    \STATE Update $\pi_t^{\mathrm{sam}}$ via Eq.~\eqref{eq:sampler_refresh}
\ENDFOR
\STATE \textbf{Output:} $\pi_{T+1}$
\end{algorithmic}
\end{algorithm}

\subsection{Instance-Dependent Regret Guarantees}
Here we present our theoretical result, showing that the proposed bonus yields a preference regret whose exploration term is \emph{data-dependent}.

We first introduce a task-hardness measure based on the geometric properties of the representation space.

\begin{myDef}[Representation diversity]
\label{def:feature_diversity}
Let $\mathcal F_{t-1}$ be the history before sampling at round $t$. An RLHF task with prompt distribution $\rho$, representation $\psi$, and policy class $\Pi$ is called conditionally $\gamma$-diverse if, for every round $t$ and every policy pair $(\pi_t,\pi_t^{\mathrm{sam}})$ reachable by the theoretical DEPO update,
\begin{equation}
\label{eq:diversity}
\E\!\left[\psi_t\psi_t^\top\mid\mathcal F_{t-1}\right] \succeq \gamma\, I,
\end{equation}
where $\psi_t=\psi(\x_t,y_t,y_t')$, $\x_t\sim\rho$, $y_t\sim\pi_t(\cdot\mid \x_t)$, $y_t'\sim\pi_t^{\mathrm{sam}}(\cdot\mid \x_t)$, $I\in\mathbb{R}^{d\times d}$ is the identity matrix, and $\gamma>0$.
\end{myDef}

\begin{myRemark}
The $\gamma$-diversity condition requires that the pairwise features induced by online sampling provide sufficient conditional coverage of the representation space. This is plausible when the prompt distribution is diverse and the representation captures meaningful semantic distinctions. Here, $\gamma$ quantifies the extent of coverage: larger $\gamma$ corresponds to richer variation and easier exploration, while smaller $\gamma$ indicates weaker feature diversity.
\end{myRemark}

\begin{myThm}
\label{thm:main_instance_dependent}
Under Assumptions~\ref{ass:BT_model}, \ref{ass:linear_rep}, and~\ref{ass:finite}, suppose the task is conditionally $\gamma$-diverse. Running DEPO update with the confidence radius in Eq.~\eqref{eq:method4_bt}, with probability at least $1-3\delta$, choosing $\alpha=\sqrt{T}$ yields
\begin{equation*}
\mathrm{R}_{\mathrm{pref}}(T)=\tilde O\left(\sqrt{T}\log\frac{|\mathcal R|}{\delta}+e^{O(S)}\sqrt d\left(\sqrt{\frac{T}{\gamma}}+\frac{1}{\gamma^2}\right)\right).
\end{equation*}
Without the diversity condition, we have
\begin{equation*}
\mathrm{R}_{\mathrm{pref}}(T) = \tilde O\left(\sqrt{T}\log\frac{|\mathcal{R}|}{\delta} + e^{O(S)}\,d\sqrt{T}\right).
\end{equation*}
\end{myThm}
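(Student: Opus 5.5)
The plan is to follow the POPO-style preference-regret analysis of \cite{chen2025avoiding}, replacing its on-policy optimism term with the elliptical radius and then bounding the sum of radii in two ways.

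\textbf{Step 1: per-round regret decomposition.} Fix the good event of Lemma~\ref{lem:gap-error-by-radius}, which has probability at least $1-\delta$. On this event, monotonicity of $\sigma$ gives
\[
\Prob^\star(y^\star\succ y'_t\mid\x)\le\sigma\big(\widehat{\Delta r}_t(\x;y^\star,y'_t)+b_t(\x,y^\star,y'_t)\big).
\]
Here $\widehat{\Delta r}_t=\beta m_{\pi_t}$ through the DPO reward–policy correspondence. I will use optimality of $\pi_{t+1}$ in Eq.~\eqref{eq:our_obj}, together with the usual reparametrization argument (compare with the policy induced by $r^\star$), to split the round-$t$ regret into three pieces:
\begin{itemize}
\item an optimism gap, bounded by $\alpha^{-1}$ times the DPO log-likelihood suboptimality of the comparator;
\item an estimation term $\E[\sigma(\widehat{\Delta r}_t(y_t,y'_t)+b_t)-\sigma(\Delta r^\star(y_t,y'_t))]$;
\item a remainder that telescopes.
\end{itemize}
Since $\sigma$ is $1/4$-Lipschitz, the estimation term is at most $\frac12 b_t(\x_t,y_t,y'_t)$ plus a generalization error.

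\textbf{Step 2: the reward-class term.} The generalization error is controlled by a standard MLE concentration bound over the finite class $\mathcal R$ (Assumption~\ref{ass:finite}). This costs $\log(|\mathcal R|/\delta)$ in cumulative log-loss. Combined with $\alpha=\sqrt T$ and Cauchy–Schwarz, it gives the $\sqrt T\log(|\mathcal R|/\delta)$ term. A second $\delta$ is spent here, and a third is spent on a martingale (Freedman/Azuma) step that converts expectations over $(\x_t,y_t,y'_t)$ into realized $\psi_t$. This yields
\[
\mathrm R_{\rm pref}(T)\lesssim \sqrt T\log\frac{|\mathcal R|}{\delta}+\sum_t\beta_t^{\rm conf}\min\{1,\|\psi_t\|_{V_{t-1}^{-1}}\}+\sqrt{T\log(1/\delta)}.
\]
Throughout, $\beta_t^{\rm conf}=e^{O(S)}\tilde O(\sqrt d)$, because $1/\kappa_S\le4e^S$ and the elliptical potential gives $\log\det V_t/\det\lambda I\le d\log(1+T/(d\lambda))$.

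\textbf{Step 3: the worst-case bound.} Without diversity, I apply Cauchy–Schwarz and the elliptical potential lemma:
\[
\sum_t\|\psi_t\|_{V_{t-1}^{-1}}\le\sqrt{2Td\log(1+T/(d\lambda))}.
\]
Multiplying by $\beta^{\rm conf}_T$ gives $e^{O(S)}d\sqrt T$.

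\textbf{Step 4: the diversity bound (main obstacle).} Here I will show $\lambda_{\min}(V_t)\gtrsim\gamma t$ with high probability once $t\ge t_0=\tilde O(1/\gamma^2)$. I will try the matrix Freedman inequality on $\sum_s(\psi_s\psi_s^\top-\E[\psi_s\psi_s^\top\mid\mathcal F_{s-1}])$. The increments are bounded by $1$, so the deviation is $O(\sqrt{t\log(d/\delta)})$, and this falls below $\gamma t/2$ when $t\gtrsim\log(d/\delta)/\gamma^2$. After $t_0$, this gives $\|\psi_t\|_{V_{t-1}^{-1}}\le\sqrt{2/(\gamma t)}$, so
\[
\sum_{t>t_0}\beta_t^{\rm conf}\|\psi_t\|_{V_{t-1}^{-1}}\lesssim e^{O(S)}\sqrt d\sqrt{T/\gamma}.
\]
For $t\le t_0$, I bound each term by $\beta^{\rm conf}$, which contributes $e^{O(S)}\sqrt d/\gamma^2$. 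The delicate point is that the diversity condition is stated for $\psi(\x_t,y_t,y'_t)$, whereas $V_t$ in the algorithm is built from winner–loser pairs. Since $\psi^{\rm w,l}=\pm\psi(\x_t,y_t,y'_t)$, the outer products coincide, and the matrix concentration applies directly. I will also need to make sure the $\delta$-budget is shared between the matrix concentration and the earlier steps so the total failure probability stays at most $3\delta$, with a union bound over $t$ absorbed in $\tilde O$.
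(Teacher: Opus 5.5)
Your proposal follows the paper's proof. It uses the same ingredients: optimism from the confidence event of Lemma~\ref{lem:gap-error-by-radius}; the optimism gap bounded, via optimality of the DEPO iterate, by $\alpha^{-1}$ times the finite-class likelihood-ratio bound, so this part is $\tfrac{2T}{\alpha}\log\tfrac{|\mathcal R|T}{\delta}$ with $\alpha=\sqrt T$; the $\tfrac14$-Lipschitz bound $\tfrac12 b_{t-1}$ on the estimation term; the elliptical potential lemma without diversity; and matrix Freedman with $t_0\asymp\gamma^{-2}\log(dT/\delta)$ plus a burn-in charge $t_0\beta^{\mathrm{conf}}_T$ under diversity. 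Your extra ``generalization error'', telescoping remainder and Cauchy--Schwarz step are not needed, because under the reward--policy identification the confidence event already gives $\tfrac12 b_{t-1}$ directly. Two of your additions are details the paper leaves implicit: converting conditional expectations to realized $\psi_t$ in the worst case, and noting that $\psi(\x_t,y_t^{\mathrm w},y_t^{\mathrm l})=\pm\psi(\x_t,y_t,y_t')$ leaves $V_t$ unchanged.
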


\begin{myRemark}
\label{rem:comparison}
The first term $\tilde O(\sqrt T\log(|\mathcal R|/\delta))$ comes from the finite-class likelihood comparison and is independent of the covariance geometry. Our data-dependent analysis improves the second term, namely the cumulative exploration cost. Intuitively, when the collected comparisons provide sufficient representation coverage, the empirical covariance $V_t$ grows in many directions, making the uncertainty radius $\|\psi\|_{V_t^{-1}}$ shrink faster for well-covered directions. Under conditional $\gamma$-diversity, this yields the cumulative exploration term $\tilde O(e^{O(S)}\sqrt{dT/\gamma})$, compared with the worst-case elliptical-potential rate $\tilde O(e^{O(S)}d\sqrt T)$. 

This improvement is orthogonal to prior reward-scale improvements: existing methods such as \cite{xie2024exploratory} and \cite{cen2024value} focus on reward-regret guarantees, while \cite{chen2025avoiding} studies preference regret and improves reward-scale dependence. In contrast, our bound shows that the exploration cost can adapt to the representation coverage of the collected trajectory. The bonus is data-dependent in the sense that its pair-specific uncertainty component $\|\psi(x,y,y')\|_{V_t^{-1}}$ is larger for directions poorly covered by the empirical covariance and smaller for directions already supported by historical comparisons.
\end{myRemark}

%% file: section/experiments.tex
\section{Experiments}
\label{sec:experiments}

In this section, we first describe the experimental setup, then compare DEPO with three strong baselines on eight benchmarks, and finally present ablation studies and a code generation case study.

\subsection{Experimental Setup}
\label{subsec:exp_setup}

\textbf{Benchmarks.}
Following the practice of~\cite{xie2024exploratory} and~\cite{chen2025avoiding}, we evaluate on six benchmark datasets: MMLU~\cite{hendrycksmeasuring}, GPQA~\cite{rein2024gpqa}, TruthfulQA~\cite{lin2022truthfulqa}, GSM8k~\cite{cobbe2021training}, AlpacaEval (AE2)~\cite{dubois2024length}, and MT-bench (MT)~\cite{bai2024mt}. These public benchmarks share a common characteristic: the preference model used for training differs from the one used for evaluation. Such a train-test shift in the preference model is uncommon in machine learning evaluations. Following~\cite{chen2025avoiding}, we further consider two representative settings simulated using the iterative DPO dataset~\cite{dong2024rlhf}.

\emph{1. Domain-specific alignment (referred to as IID).}
The goal is to adapt an LLM to a narrow domain, where prompts and desired behaviors are relatively concentrated. This setting stresses exploration efficiency: the learner should quickly find and refine high-value behaviors without overfitting to early noisy feedback. We evaluate on an IID held-out prompt set drawn from the same distribution as training, using the same preference model as the oracle.

\emph{2. Generalist alignment (referred to as Alpaca).}
The goal is to train a general assistant that works across diverse topics and intents. This setting emphasizes robustness and coverage: exploration should discover under-covered yet important behaviors without sacrificing performance on common prompts. We evaluate on AlpacaEval 2.0 prompts, while still using the same preference model as the oracle.

\begin{table*}[!t]
\centering
\resizebox{0.98\linewidth}{!}{
\begin{tabular}{@{}lcccccccccc@{}}
\toprule
\textbf{Model} &
\textbf{MMLU} &
\textbf{GPQA} &
\textbf{TruthfulQA} &
\textbf{GSM8k} &
\textbf{AE2} &
\textbf{MT} &
\multicolumn{2}{c}{\textbf{IID}} &
\multicolumn{2}{c}{\textbf{Alpaca}} \\
\cmidrule(lr){8-9} \cmidrule(lr){10-11}
 &  &  &  &  &  &  & \textbf{WR} & \textbf{AvgR} & \textbf{WR} & \textbf{AvgR} \\
\midrule
Llama-3-8B-SFT & 62.61 & 32.40 & 53.50 & 71.77 & 10.22  & 7.66 & -- & -5.54 & -- & -7.92 \\
\midrule
DPO-iter1 & 62.87 $\pm$ 0.46 & \textbf{32.84 $\pm$ 0.39} & 56.24 $\pm$ 0.71 & 76.65 $\pm$ 0.52 & -- & -- & 62.4 $\pm$ 0.63 & -4.50 $\pm$ 0.24 & 78.1 $\pm$ 0.58 & -6.02 $\pm$ 0.16 \\
DPO-iter2 & 62.92 $\pm$ 0.54 & 30.76 $\pm$ 0.67 & 57.66 $\pm$ 0.48 & 77.80 $\pm$ 0.36 & -- & -- & 66.6 $\pm$ 0.72 & -3.59 $\pm$ 0.35 & 87.1 $\pm$ 0.41 & -3.34 $\pm$ 0.29 \\
DPO-iter3 & 63.08 $\pm$ 0.38 & 31.24 $\pm$ 0.74 & 59.61 $\pm$ 0.45 & 77.38 $\pm$ 0.62 & 36.14 $\pm$ 0.57 & 8.29 $\pm$ 0.34 & 72.5 $\pm$ 0.66 & -2.36 $\pm$ 0.13 & 91.2 $\pm$ 0.47 & -0.02 $\pm$ 0.19 \\
\midrule
XPO-iter1 & 62.66 $\pm$ 0.51 & 32.47 $\pm$ 0.43 & 56.19 $\pm$ 0.65 & 76.42 $\pm$ 0.48 & -- & -- & 62.6 $\pm$ 0.70 & -4.40 $\pm$ 0.26 & 78.3 $\pm$ 0.61 & -5.79 $\pm$ 0.15 \\
XPO-iter2 & 63.13 $\pm$ 0.36 & 31.25 $\pm$ 0.59 & 58.61 $\pm$ 0.44 & 77.30 $\pm$ 0.73 & -- & -- & 67.3 $\pm$ 0.50 & -3.28 $\pm$ 0.37 & 88.0 $\pm$ 0.52 & -2.60 $\pm$ 0.24 \\
XPO-iter3 & 63.12 $\pm$ 0.62 & 31.01 $\pm$ 0.55 & 59.34 $\pm$ 0.39 & \textbf{78.41 $\pm$ 0.46} & 38.23 $\pm$ 0.68 & 8.21 $\pm$ 0.41 & 72.9 $\pm$ 0.57 & -2.11 $\pm$ 0.34 & 91.6 $\pm$ 0.35 & 0.64 $\pm$ 0.29 \\
\midrule
POPO-iter1 & 62.85 $\pm$ 0.44 & 32.38 $\pm$ 0.72 & 56.11 $\pm$ 0.50 & 76.88 $\pm$ 0.37 & -- & -- & 62.5 $\pm$ 0.59 & -4.32 $\pm$ 0.36 & 80.0 $\pm$ 0.48 & -5.68 $\pm$ 0.25 \\
POPO-iter2 & 62.95 $\pm$ 0.53 & 31.72 $\pm$ 0.40 & 57.81 $\pm$ 0.69 & 77.39 $\pm$ 0.58 & -- & -- & 68.2 $\pm$ 0.45 & -3.15 $\pm$ 0.22 & 89.1 $\pm$ 0.73 & -2.45 $\pm$ 0.10 \\
POPO-iter3 & 63.18 $\pm$ 0.41 & 31.94 $\pm$ 0.63 & 59.07 $\pm$ 0.56 & 77.63 $\pm$ 0.49 & 40.13 $\pm$ 0.71 & 8.38 $\pm$ 0.38 & 73.2 $\pm$ 0.54 & -2.02 $\pm$ 0.16 & 92.4 $\pm$ 0.67 & 0.60 $\pm$ 0.23 \\
\midrule
DEPO-iter1 & 62.88 $\pm$ 0.47 & 32.83 $\pm$ 0.52 & 56.14 $\pm$ 0.64 & 76.55 $\pm$ 0.39 & -- & -- & 62.6 $\pm$ 0.58 & -4.30 $\pm$ 0.31 & 80.2 $\pm$ 0.50 & -5.61 $\pm$ 0.22 \\
DEPO-iter2 & 63.12 $\pm$ 0.35 & 32.19 $\pm$ 0.68 & 58.26 $\pm$ 0.43 & 77.49 $\pm$ 0.55 & -- & -- & 68.9 $\pm$ 0.49 & -3.01 $\pm$ 0.27 & 89.5 $\pm$ 0.44 & -2.18 $\pm$ 0.19 \\
DEPO-iter3 & \textbf{63.23 $\pm$ 0.40} & 32.50 $\pm$ 0.46 & \textbf{59.82 $\pm$ 0.51} & 78.39 $\pm$ 0.42 & \textbf{41.56 $\pm$ 0.60} & \textbf{8.45 $\pm$ 0.36} & \textbf{74.1 $\pm$ 0.53} & \textbf{-1.94 $\pm$ 0.28} & \textbf{92.8 $\pm$ 0.39} & \textbf{0.66 $\pm$ 0.21} \\
\bottomrule
\end{tabular}
}
\vspace{2mm}
\caption{\textbf{Main comparison across three online RLHF iterations.} DEPO achieves the strongest overall performance at iter3, with consistent gains on AE/MT and improved win rates on both IID and Alpaca.}
\label{tab:opo_main}
\end{table*}

\textbf{Contenders.} We compare the proposed method with DPO~\cite{rafailov2023direct}, XPO~\cite{xie2024exploratory}, and POPO~\cite{chen2025avoiding}. For a fair comparison, all methods use the same base model, preference oracle, and sampler design as POPO~\cite{chen2025avoiding}, and are trained for the same number of iterations.

\textbf{Implementation details.} We follow the three-iteration online RLHF setting considered in~\cite{xie2024exploratory,cen2024value,chen2025avoiding}. We use \texttt{Llama-3-8B-Flow-SFT} as the base model, \texttt{RLHFlow-ultrafeedback} dataset as the training prompt sets, and \texttt{GRM-Llama3-8B-rewardmodel-ft} as the training preference model. Across all three iterations, we keep the same base model initialization.

\textbf{Representation construction.}
We follow RepExp~\cite{tuyls2025representation} to construct a vector representation $\phi(\x,y)$ for each prompt-response pair from the LM hidden states. We extract representations using the same reference model \texttt{Llama-3-8B-Flow-SFT} used throughout online RLHF.

Specifically, given a response $y=y_{1:T}$, we record the last-layer hidden state at each generation step and mean-pool across output tokens, setting $\phi(\x,y)=\frac{1}{T}\sum_{t=1}^{T} h_{\text{ref}}(\x,y_{1:t})$. We use mean pooling rather than using only the final-token or penultimate-token representation. For efficiency, we also project $\phi(x,y)$ to 512 dimensions via a sparse random projection. Within each prompt, we mean-center the projected representations over the sampled responses before computing uncertainty-related quantities, and we perform covariance-related operations in double precision for numerical stability.

\subsection{Main Results}
\label{subsec:main_results}

Table~\ref{tab:opo_main} reports the comparison results over three online RLHF iterations. After three iterations, the proposed DEPO algorithm achieves the best final performance on most of the datasets, with consistent improvements on the domain-specific and generalist alignment tasks. In particular, DEPO attains higher win rates and average rewards on both IID and Alpaca compared to DPO and prior exploration baselines at iteration 3. These results are consistent with the interpretation that DEPO improves online RLHF not by simply collecting more preference data, but by collecting more informative comparisons that are better aligned with optimization. This trend matches the intended role of the data-dependent bonus, which steers exploration toward comparisons that complement the historical coverage and thus translate into improved alignment outcomes.

On the GPQA dataset, we observe a non-monotonic performance trend across iterations: performance drops at iteration 2 and recovers at iteration 3 for all methods. This suggests that exploration can introduce short-term drawbacks when it pushes the learner into uncertain regions, even though it may ultimately encourage collecting under-covered yet informative behaviors without overly biasing training toward ``easy'' comparisons. Against this backdrop, DEPO exhibits a smaller degradation at iteration 2 and a stronger recovery at iteration 3 compared with DPO and earlier exploration baselines. This pattern suggests that DEPO's data-dependent exploration bonus leverages historical data to steer exploration in a more effective direction.

Overall, these results support that introducing a data-dependent exploration bonus improves both online alignment efficiency and generalization across tasks.

\subsection{Ablation Studies}
\label{subsec:ablation}

We then run ablation studies to better understand the behavior of the DEPO algorithm. We focus on the strength of the data-dependent exploration bonus, the sampler used to generate preference pairs, and the RLHF hyperparameter $\beta$.

\textbf{Representation Coverage.} Since the proposed bonus depends on the geometry of the pairwise representation space, we analyze the empirical covariance spectrum of the collected pairwise features. Let $\Sigma_t=\sum_{s=1}^t\psi_s\psi_s^\top$. We report the effective dimension $d_{\mathrm{eff}}(\Sigma_t)/d=(\mathrm{tr}(\Sigma_t))^2/d\,\mathrm{tr}(\Sigma_t^2)$ and the top-eigenvalue energy ratio, namely, $\rho_k(\Sigma_t)=\sum_{i=1}^k\lambda_i(\Sigma_t)/\sum_{i=1}^d\lambda_i(\Sigma_t)$. As shown in Table~\ref{tab:coverage_diagnostic}, DEPO achieves a larger $d_{\mathrm{eff}}/d$ and smaller $\rho_{10}$ and $\rho_{50}$ than the baselines. A higher effective dimension, together with lower top-eigenvalue energy ratios, indicates that the collected comparisons are less concentrated and span a broader range of directions in the representation space.

\begin{table}[t]
\centering
\caption{\textbf{Representation coverage on the collected pairwise features.} Larger $d_{\mathrm{eff}}$ and smaller eigenvalue concentration indicate broader coverage of the representation space.}
\label{tab:coverage_diagnostic}
\begin{tabular}{lcccc}
\toprule
Method & $d_{\mathrm{eff}}/d\uparrow$ & $\rho_{10}\downarrow$ & $\rho_{50}\downarrow$ \\
\midrule
DPO & 0.73 & 0.69 & 0.82 \\
XPO & 0.75 & 0.62 & 0.77 \\
POPO & 0.77 & 0.62 & 0.78 \\
DEPO & 0.81 & 0.59 & 0.72 \\
\bottomrule
\end{tabular}
\end{table}

\textbf{Effect of exploration strength $c_b$.}
We further ablate the proxy scale $c_b$, which controls the strength of the data-dependent exploration bonus through the practical approximation of the confidence width. We compare three representative values $c_b\in\{1\mathrm{e}{-}1,\,2\mathrm{e}{-}2,\,1\mathrm{e}{-}2\}$ and report the training dynamics using the effective comparison ratio in Figure~\ref{fig:effratio_da_ga}. Across both domain-specific alignment and generalist alignment, all settings exhibit consistent improvement from iteration 1 to 3, indicating that the overall online optimization remains stable under different bonus scales. Moreover, $c_b=2\mathrm{e}{-}2$ shows the most favorable trend, achieving a consistently higher effective comparison ratio, especially at later iterations. This suggests that a moderate bonus scale better balances encouraging informative exploration and avoiding overly aggressive perturbations. Unless otherwise stated, we use $c_b=2\mathrm{e}{-}2$ in all experiments.

\textbf{Choice of sampler policy.}
In the online RLHF procedure, we generate responses for each prompt using the current policy together with a sampler policy. We ablate this design choice because the sampler directly controls the difficulty of the comparisons and the effectiveness of learning.

Specifically, we consider two sampler configurations: (i) an \emph{on-policy} sampler $(\pi_t,\pi_t)$, where both responses are sampled from the current policy at iteration $t$; and (ii) a \emph{mixed} sampler $(\pi_{t-1},\pi_{\rm ref})$, where one response is sampled from the previous-iteration policy and the other from the initial reference policy $\pi_0$. The mixed sampler can help preserve response diversity by anchoring one candidate to the initial model, but it may also generate overly easy comparisons, which contribute less to refining the decision boundary between competitive responses.

\begin{figure}[t]
\centering
\begin{minipage}[t]{0.48\textwidth}
  \centering
  \includegraphics[width=\linewidth]{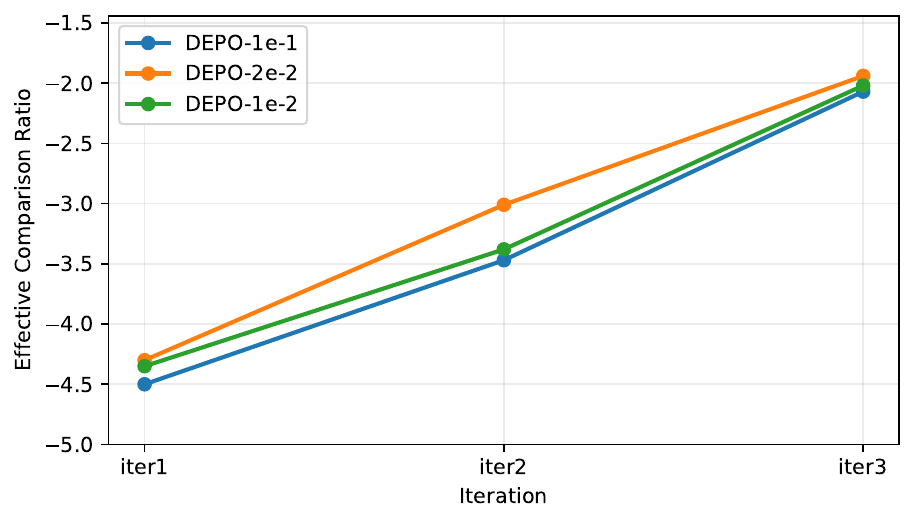}

  \vspace{1mm}
  \small \textbf{(a) Domain-specific alignment task.}
  \label{fig:effratio_da}
\end{minipage}\hfill
\begin{minipage}[t]{0.48\textwidth}
  \centering
  \includegraphics[width=\linewidth]{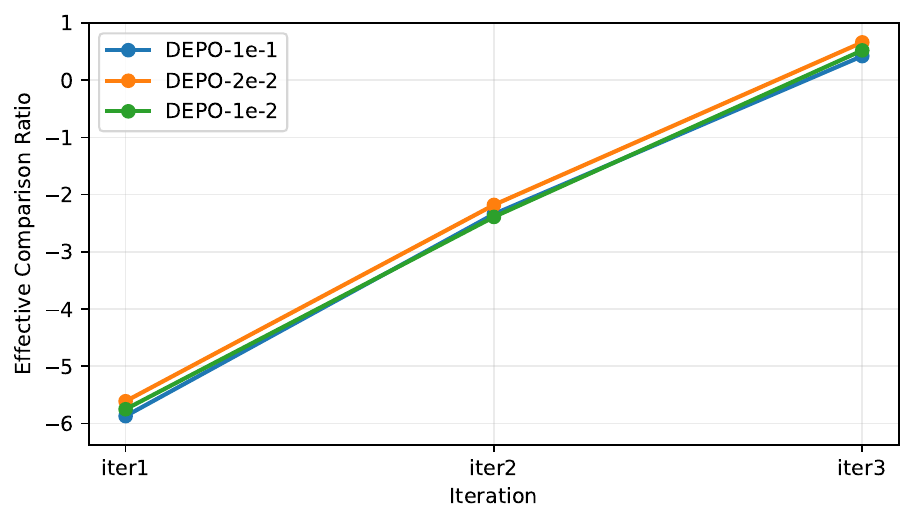}

  \vspace{1mm}
  \small \textbf{(b) Generalist alignment task.}
  \label{fig:effratio_ga}
\end{minipage}
\caption{\textbf{Ablation on the bonus scale $c_b$.} A moderate choice $c_b=2\mathrm{e}{-}2$ yields the strongest overall trend.}
\label{fig:effratio_da_ga}
\end{figure}

\begin{table}[!t]
\centering
\begin{tabular}{lcc}
\toprule
\textbf{Model} & \textbf{WR} & \textbf{AvgR} \\
\midrule
$(\pi_t,\pi_t)$-iter2 & 88.6 & -2.84 \\
$(\pi_t,\pi_t)$-iter3 & 92.5 & 0.21 \\
\midrule
$(\pi_{t-1},\piRef)$-iter2 & 87.8 & -4.09 \\
$(\pi_{t-1},\piRef)$-iter3 & 91.7 & -2.63 \\
\bottomrule
\end{tabular}
\vspace{2mm}
\caption{\textbf{Average reward and win rate comparison under different sampler configurations.} The on-policy sampler $(\pi_t,\pi_t)$ yields higher WR and AvgR at later iterations, suggesting more informative comparisons.}
\label{tab:sampler_choice}
\end{table}

We report the comparison results in Table~\ref{tab:sampler_choice}. Overall, both sampler choices lead to iterative improvements, indicating that DEPO is fairly robust to the sampler configuration. However, the on-policy sampler $(\pi_t,\pi_t)$ achieves consistently higher win rate and average reward at later iterations, which suggests that pairing two candidates from the current policy yields harder and more informative comparisons, thereby accelerating improvement. Unless otherwise stated, we use $(\pi_t,\pi_t)$ as the default sampler in all experiments.

\textbf{Effect of the exploration weight $\alpha$.}
We further examine the sensitivity of DEPO to the exploration weight $\alpha$, which controls the strength of the data-dependent bonus. Table~\ref{tab:alpha_ablation} reports the final-iteration performance under different values of $\alpha$. The results show that a moderate exploration weight gives the best overall performance. Increasing $\alpha$ from zero improves performance by encouraging exploration toward under-covered regions, while an overly large value can overemphasize exploration and slightly degrade the final reward. These results support the default choice of $\alpha$ used in the main experiments.

\textbf{Effect of the hyperparameter $\beta$.}
We also conduct an ablation study to examine the effect of the KL coefficient $\beta$. Following the practical design in~\cite{chen2025avoiding}, we co-tune the exploration weight $\alpha$ with $\beta$ to keep $\alpha\beta$ constant, so that the gradient scale of the exploration term remains comparable across settings. We evaluate DEPO with three representative choices of $\beta$ and report the results in Table~\ref{tab:beta_ablation}. Across MMLU, GSM8k, and AlpacaEval, the method improves steadily from iter1 to iter3 under all $\beta$ values, indicating that the iterative procedure is robust to moderate changes in the KL strength. However, performance is not monotonic in $\beta$: the intermediate setting $\beta=3\mathrm{e}{-}2$ consistently gives the strongest iter3 results, while larger $\beta$ tends to constrain updates more tightly and smaller $\beta$ can lead to less stable gains. Overall, a moderate $\beta$ provides the best trade-off between regularization and optimization progress.

\begin{table}[t]
\centering
\caption{\textbf{Ablation study on parameter $\alpha$}}
\label{tab:alpha_ablation}
\begin{tabular}{lcccc}
\toprule
$\alpha$ & $\sqrt{T}/10$ & $\sqrt{T}/2$ & $\sqrt{T}$ & $2\sqrt{T}$ \\
\midrule
IID WR $\uparrow$ & 72.2 & 73.5 & 74.1 & 72.4 \\
IID AvgR $\uparrow$ & -3.89 & -2.33 & -1.94 & -3.54 \\
Alpaca WR $\uparrow$ & 90.2 & 91.4 & 92.8 & 89.6 \\
Alpaca AvgR $\uparrow$ & -1.06 & -0.57 & 0.66 & -1.33 \\
\bottomrule
\end{tabular}
\end{table}

\begin{table}[!t]
\centering
\begin{tabular}{l l c c c c}
\toprule
\textbf{$\beta$} & \textbf{Model} & \textbf{MMLU} & \textbf{GSM8k} & \multicolumn{2}{c}{\textbf{Alpaca Data}} \\
\cmidrule(lr){5-6}
 &  &  &  & \textbf{WR} & \textbf{AvgR} \\
\midrule
\multirow{3}{*}{1e-1}
 & DEPO-iter1 & 62.45 & 75.24 & 80.5 & -5.77 \\
 & DEPO-iter2 & 62.97 & 76.32 & 89.3 & -2.56 \\
 & DEPO-iter3 & 63.06 & 77.28 & 92.3 & 0.59 \\
\midrule
\multirow{3}{*}{3e-2}
 & DEPO-iter1 & 62.88 & 76.55 & 80.2 & -5.61 \\
 & DEPO-iter2 & 63.12 & 77.49 & 89.5 & -2.18 \\
 & DEPO-iter3 & 63.23 & 78.39 & 92.8 & 0.66 \\
\midrule
\multirow{3}{*}{1e-2}
 & DEPO-iter1 & 62.68 & 75.76 & 78.9 & -5.82 \\
 & DEPO-iter2 & 63.02 & 76.59 & 89.3 & -2.68 \\
 & DEPO-iter3 & 63.09 & 77.80 & 92.1 & 0.50 \\
\bottomrule
\end{tabular}
\vspace{2mm}
\caption{\textbf{Ablation study on parameter $\beta$.}}
\label{tab:beta_ablation}
\end{table}

\subsection{A Real-world Application}
\label{subsec:realworld_lcb}

We further evaluate DEPO on the code generation task using the LiveCodeBench dataset~\cite{jainlivecodebench}. In this setting, each prompt $\x$ is a programming problem and each response $y$ is a complete program generated by the model. At round $t$, we sample a problem $\x_t$, generate two candidate programs $y_t$ and $y_t'$, and execute both programs on the corresponding test cases to obtain preference feedback. We convert the execution outcomes into pairwise preference labels by defining $y_t \succ y_t'$ whenever $y_t$ passes more test cases than $y_t'$. If both programs pass the same number of test cases, we break ties by preferring the one with fewer compilation or runtime errors. The resulting preference pair is then added to the online training buffer for policy update. 

We follow the official LiveCodeBench protocol and report Pass@1 separately on the Easy, Medium, and Hard splits. For online training, coding problems are sampled from the training portion of LiveCodeBench, and evaluation is performed on a disjoint held-out test set. We report the results in Table~\ref{tab:lcb_main}. Overall, DEPO consistently improves upon the base LLM across the three difficulty splits of LiveCodeBench and achieves the best performance among the compared methods. These results indicate that the proposed data-dependent exploration mechanism remains effective in code generation, where preference feedback is derived directly from executable test outcomes. This also suggests that the benefit of DEPO is not limited to general tasks, but extends to real-world tasks with verifiable correctness.

\begin{table}[!t]
\centering
\begin{tabular}{lccc}
\toprule
\textbf{Model} & \textbf{Easy} & \textbf{Medium} & \textbf{Hard} \\
\midrule
Qwen2.5-Coder-7B-Instruct & 56.1 & 3.8 & 6.9 \\
+ DPO-iter3 & 57.6 & 14.5 & 8.7 \\
+ XPO-iter3 & 58.4 & 15.2 & 9.4 \\
+ POPO-iter3 & 58.2 & 15.0 & 9.2 \\
+ DEPO-iter3 & 62.5 & 18.8 & 9.7 \\
\bottomrule
\end{tabular}
\vspace{2mm}
\caption{\textbf{Results on LiveCodeBench code generation.} Comparison of online RLHF methods with execution-based preference feedback.}
\label{tab:lcb_main}
\end{table}

%% file: section/related.tex
%!TEX root = ../FCS/template.tex

\section{Related Work}
\label{sec:related}

In this section, we review the relevant literature and related techniques.

\paragraph{Reinforcement Learning from Human Feedback} A closely related RLHF framework to our work was introduced by~\cite{christiano2017deep}, which learns from preference comparisons and later became a core technique for aligning LLMs with human preferences. The widely adopted RLHF pipeline~\cite{ouyang2022training,touvron2023llama} typically follows two stages: (i) training a reward model from preference data, and (ii) refining the LLM policy against the learned reward using policy-optimization methods such as \emph{proximal policy optimization}~{(PPO)}~\cite{schulman2017proximal}. Motivated by the complexity and sensitivity of this two-stage procedure, a fruitful line of work has developed \emph{direct preference optimization}~{(DPO)}~\cite{rafailov2023direct} and related alternatives~\cite{azar2024general,zhao2023slic}, which bypass explicit reward modeling and instead learn the policy by directly optimizing a preference-based objective on the comparison dataset. Moreover, the evaluation of alignment quality has received increasing attention, as gains in safety or preference satisfaction may be accompanied by losses in utility and other important capabilities, calling for a more comprehensive and trustworthy evaluation framework~\cite{zhao2024gains,zhang2025trustworthy}.

\paragraph{Active Exploration in Online RLHF} Recently, online RLHF has attracted substantial attention. Early approaches rely primarily on passive exploration induced by the stochasticity of the LLM policy, without explicit mechanisms to promote diverse sampling or prioritize informative, high-value comparisons~\cite{dong2024rlhf,xiongbuilding}. As the importance of active exploration in RLHF has been increasingly recognized~\cite{dwaracherla2024efficient}, a growing body of work has developed exploration-aware algorithms. A notable result in this direction is online RLHF using the one-pass mirror-descent estimator~\cite{NeurIPS'25:rlhf}, which proposes a provably efficient active querying method but focuses primarily on the prompt space. Another line of work instead encourages exploration in the response space by promoting the generation of novel responses~\cite{cen2024value,zhang2024self,xie2024exploratory,chen2025avoiding}. Most of these approaches augment a DPO-style objective with an exploration bonus to encourage broader coverage during online data collection. While effective in general, they typically employ uniform exploration bonuses across tasks, whereas our method leverages historical data to construct a data-dependent bonus and admits a data-adaptive analysis.

From a technical perspective, our bonus design is related to confidence based exploration methods in bandits and reinforcement learning, especially optimism-style methods that leverage uncertainty estimates to balance exploration and exploitation~\cite{abbasi2011improved,li2010contextual,zhanggeneralized,NeurIPS'25:rlhf}. Related efforts also appear in adaptive allocation frameworks for resource-constrained learning~\cite{wang2024learning}, where they designed an optimal allocation strategy that improves the efficiency in response to evolving uncertainty and learning progress. Although developed in different settings and not directly concerned with online preference optimization for LLM alignment, these works provide intuition for incorporating optimism style exploration into our framework.

%% file: section/appendix.tex
\appendix

\section{Appendix}

In this section, we first present the necessary technical lemmas and then provide the detailed proof of our main theorem.

\subsection{Technical Lemmas}
\label{app:lemma}

In this subsection, we provide the necessary technical lemmas.

\begin{myLemma}[Self-Normalized Concentration]
\label{lem:selfnorm}
Let $\widehat\theta_t$ be the constrained regularized logistic MLE over $\Theta_S$ based on the oriented observations $\{(\psi_s,z_s)\}_{s=1}^t$. Under Assumptions~\ref{ass:BT_model} and~\ref{ass:linear_rep}, there exists a universal constant $c_0>0$ such that, with probability at least $1-\delta$, for all $t\ge 0$,
\begin{equation*}
\|\hat{\theta}_t-\theta^*\|_{V_t} \le \beta_t^{\mathrm{conf}} = \frac{c_0}{\kappa_S}\left( \sqrt{\lambda}S + \sqrt{2\log\left( \frac{\det(V_t)^{1/2}}{\det(\lambda I)^{1/2}\delta} \right)}
\right).
\end{equation*}
\end{myLemma}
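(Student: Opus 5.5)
The plan is to combine the first-order optimality condition of the constrained convex program with a mean-value (integral) form of the gradient difference, lower-bounded by the curvature $\kappa_S$ in Eq.~\eqref{eq:local_curvature}. The resulting noise term is then controlled by the self-normalized martingale inequality of \cite{abbasi2011improved}.

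\textbf{Setup.} Write the regularized negative log-likelihood as
\begin{equation*}
\mathcal L_t(\theta)=-\sum_{s=1}^t\Big[z_s\log\sigma(\langle\theta,\psi_s\rangle)+(1-z_s)\log\big(1-\sigma(\langle\theta,\psi_s\rangle)\big)\Big]+\frac{\lambda}{2}\|\theta\|_2^2 ,
\end{equation*}
so that $\widehat\theta_t\in\argmin_{\theta\in\Theta_S}\mathcal L_t(\theta)$.

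Here the oriented observations are $\psi_s=\psi(\x_s,y_s,y'_s)$, with the orientation fixed before the label is revealed, and $z_s=\mathbb 1\{y_s\succ y'_s\}$. This choice is important. If instead one oriented by the realized winner, $z_s$ would be deterministic and the noise would not be a martingale difference. With this orientation, Assumption~\ref{ass:BT_model} and Assumption~\ref{ass:linear_rep} give
\begin{equation*}
\E[z_s\mid\mathcal F_{s-1},\psi_s]=\sigma(\langle\theta^*,\psi_s\rangle).
\end{equation*}
Define the noise $\varepsilon_s=z_s-\sigma(\langle\theta^*,\psi_s\rangle)$. It is a martingale difference taking values in $[-1,1]$, hence conditionally $1$-sub-Gaussian (indeed $\tfrac12$-sub-Gaussian). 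The gradient is
\begin{equation*}
\nabla\mathcal L_t(\theta)=\sum_s\big(\sigma(\langle\theta,\psi_s\rangle)-z_s\big)\psi_s+\lambda\theta .
\end{equation*}

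\textbf{Step 1 (optimality plus curvature).} Since $\mathcal L_t$ is convex, $\Theta_S$ is convex, and $\theta^*\in\Theta_S$, the variational inequality gives $\langle\nabla\mathcal L_t(\widehat\theta_t),\theta^*-\widehat\theta_t\rangle\ge 0$.

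Set $\Delta=\widehat\theta_t-\theta^*$. By the fundamental theorem of calculus,
\begin{equation*}
\nabla\mathcal L_t(\widehat\theta_t)-\nabla\mathcal L_t(\theta^*)=G_t\Delta,\qquad G_t=\sum_s a_s\psi_s\psi_s^\top+\lambda I,\quad a_s=\int_0^1\sigma'\big(\langle\theta^*+v\Delta,\psi_s\rangle\big)\,dv .
\end{equation*}
The whole segment between $\theta^*$ and $\widehat\theta_t$ lies in $\Theta_S$ by convexity, and $\|\psi_s\|_2\le1$, so $a_s\ge\kappa_S$. Since also $\kappa_S\le 1/4\le1$, we get $G_t\succeq\kappa_S V_t$. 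Combining with the variational inequality,
\begin{equation*}
\kappa_S\|\Delta\|_{V_t}^2\le\Delta^\top G_t\Delta\le\langle-\nabla\mathcal L_t(\theta^*),\Delta\rangle .
\end{equation*}

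\textbf{Step 2 (isolating the noise).} Evaluating the gradient at the truth gives $-\nabla\mathcal L_t(\theta^*)=\xi_t-\lambda\theta^*$, where $\xi_t=\sum_s\varepsilon_s\psi_s$. Applying Cauchy--Schwarz in the $V_t^{-1}/V_t$ dual norms,
\begin{equation*}
\kappa_S\|\Delta\|_{V_t}^2\le\big(\|\xi_t\|_{V_t^{-1}}+\lambda\|\theta^*\|_{V_t^{-1}}\big)\|\Delta\|_{V_t}.
\end{equation*}
Since $V_t\succeq\lambda I$, we have $\lambda\|\theta^*\|_{V_t^{-1}}\le\sqrt\lambda\,S$. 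Dividing through yields
\begin{equation*}
\|\Delta\|_{V_t}\le\frac{1}{\kappa_S}\big(\|\xi_t\|_{V_t^{-1}}+\sqrt\lambda S\big).
\end{equation*}

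\textbf{Step 3 (self-normalized bound).} Apply Theorem~1 of \cite{abbasi2011improved} with $R$-sub-Gaussian noise, $R\le1$, and predictable $\psi_s$. With probability at least $1-\delta$, simultaneously for all $t\ge0$,
\begin{equation*}
\|\xi_t\|_{V_t^{-1}}\le\sqrt{2\log\big(\det(V_t)^{1/2}\det(\lambda I)^{-1/2}\delta^{-1}\big)} .
\end{equation*}
Substituting into Step~2 gives the claim with $c_0=1$; any larger universal $c_0$ also works.

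\textbf{Main obstacle.} The delicate step is Step~1. The curvature lower bound needs $a_s\ge\kappa_S$ uniformly, which uses only points of $\Theta_S$; this is exactly why the estimator is constrained to $\Theta_S$. In addition, the uniform-in-time martingale argument needs the orientation and measurability conventions described in the setup. If the projected features were random or data-dependent, I would first check that $\psi_s$ is $\mathcal F_{s-1}\vee\sigma(\x_s,y_s,y'_s)$-measurable before the label is drawn, so that Abbasi-Yadkori's filtration condition holds.
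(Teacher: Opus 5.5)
Your proposal is correct and follows the paper's proof essentially step for step. Both use the variational inequality at $\widehat\theta_t$, the curvature lower bound $\sigma'\ge\kappa_S$ on the segment inside $\Theta_S$ (your integral form is equivalent to the paper's per-sample mean-value step), Cauchy--Schwarz with the regularization term bounded by $\sqrt{\lambda}S$, and the Abbasi-Yadkori self-normalized bound, which gives $c_0=1$. One point you add is worthwhile: you require $\psi_s$ to be oriented before the label is drawn, so that $\varepsilon_s$ is a genuine martingale difference. The paper only assumes this implicitly, and Algorithm~\ref{alg:method} orients by the winner, so your remark clarifies the paper's argument rather than departing from it.
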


\begin{proof}
We denote by $p_s^*=\sigma(\langle\theta^*,\psi_s\rangle)$ and $\varepsilon_s=z_s-p_s^*$. Conditional on the history and the sampled pair, $\psi_s$ is fixed and $\mathbb E[\varepsilon_s]=0$; hence $\varepsilon_s$ is conditionally zero-mean and sub-Gaussian. The self-normalized martingale inequality gives, for $t\ge0$,
\begin{equation*}
\left\|\sum_{s=1}^t\varepsilon_s\psi_s\right\|_{V_t^{-1}} \le \sqrt{2\log \left( \frac{\det(V_t)^{1/2}}{\det(\lambda I)^{1/2}\delta}\right)}.
\end{equation*}
Let $L_t(\theta)=\sum_{s=1}^t\ell_{\log}(z_s,\langle\theta,\psi_s\rangle)+\frac{\lambda}{2}\|\theta\|_2^2$ and $\Delta_t=\hat\theta_t-\theta^*$. Since $\hat\theta_t$ minimizes $L_t$ over the convex set $\Theta_S$ and $\theta^*\in\Theta_S$, the first-order optimality condition for constrained convex optimization gives
\begin{equation*}
\langle \nabla L_t(\hat\theta_t),\theta^*-\hat\theta_t\rangle\ge 0.
\end{equation*}
Since $\nabla_\theta \ell_{\log}(z_s,\langle\theta,\psi_s\rangle)=(\sigma(\langle\theta,\psi_s\rangle)-z_s)\psi_s$, we obtain
\begin{equation*}
\sum_{s=1}^t \left(\sigma(\langle\hat\theta_t,\psi_s\rangle)-z_s\right) \langle\psi_s,\Delta_t\rangle + \lambda\langle\hat\theta_t,\Delta_t\rangle \le 0.
\end{equation*}
Using $z_s=p_s^*+\varepsilon_s$ and $\hat\theta_t=\theta^*+\Delta_t$, this implies
\begin{equation*}
\begin{aligned}
\sum_{s=1}^t
\left(\sigma(\langle\hat\theta_t,\psi_s\rangle)-\sigma(\langle\theta^*,\psi_s\rangle)\right)& \langle\psi_s,\Delta_t\rangle + \lambda\|\Delta_t\|_2^2\\
& \le \left\langle\sum_{s=1}^t\varepsilon_s\psi_s,\Delta_t\right\rangle-\lambda\langle\theta^*,\Delta_t\rangle.
\end{aligned}
\end{equation*}
For each $s$, by the mean-value theorem, there exists $\tilde u_s$ between $\langle\hat\theta_t,\psi_s\rangle$ and $\langle\theta^*,\psi_s\rangle$ such that
\begin{equation*}
\sigma(\langle\hat\theta_t,\psi_s\rangle)-\sigma(\langle\theta^*,\psi_s\rangle) = \sigma'(\tilde u_s)\langle\psi_s,\Delta_t\rangle.
\end{equation*}
Since $\hat\theta_t,\theta^*\in\Theta_S$, $\Theta_S$ is convex, and $\|\psi_s\|_2\le 1$, every intermediate point satisfies $|\tilde u_s|\le S$. Hence $\sigma'(\tilde u_s)\ge \kappa_S$. Therefore,
\begin{equation*}
\kappa_S\sum_{s=1}^t\langle\psi_s,\Delta_t\rangle^2+\lambda\|\Delta_t\|_2^2 \le \left\langle\sum_{s=1}^t\varepsilon_s\psi_s,\Delta_t\right\rangle-\lambda\langle\theta^*,\Delta_t\rangle.
\end{equation*}
Since $\kappa_S\le 1$, the left-hand side is at least $\kappa_S\|\Delta_t\|_{V_t}^2$. By Cauchy--Schwarz,
\begin{equation*}
\left\langle\sum_{s=1}^t\varepsilon_s\psi_s,\Delta_t\right\rangle \le \left\|\sum_{s=1}^t\varepsilon_s\psi_s\right\|_{V_t^{-1}}\|\Delta_t\|_{V_t},
\end{equation*}
and
\begin{equation*}
-\lambda\langle\theta^*,\Delta_t\rangle \le \sqrt{\lambda}\|\theta^*\|_2\|\Delta_t\|_{V_t} \le \sqrt{\lambda}S\|\Delta_t\|_{V_t}.
\end{equation*}
Thus, we have
\begin{equation*}
\kappa_S\|\Delta_t\|_{V_t}^2 \le \left(\left\|\sum_{s=1}^t\varepsilon_s\psi_s\right\|_{V_t^{-1}}+\sqrt{\lambda}S\right)\|\Delta_t\|_{V_t}.
\end{equation*}
If $\|\Delta_t\|_{V_t}=0$, the desired bound is immediate. Otherwise, dividing both sides by $\kappa_S\|\Delta_t\|_{V_t}$ gives
\begin{equation*}
\|\Delta_t\|_{V_t} \le \frac{1}{\kappa_S}\left(\left\|\sum_{s=1}^t\varepsilon_s\psi_s\right\|_{V_t^{-1}}+\sqrt{\lambda}S\right).
\end{equation*}
Combining this with the self-normalized martingale bound proves the claim, up to the universal constant $c_0$.
\end{proof}

\begin{myCor}[Pointwise UCB]
\label{cor:pointwise_ucb}
On the event of Lemma~\ref{lem:selfnorm}, for all $t\ge 0$ and all $(x,y,y')$,
\begin{equation*}
\big|\Delta r^\star(x;y,y')-\widehat{\Delta r}_{t}(x;y,y')\big|\le\beta_t^{\mathrm{conf}}\|\psi(x,y,y')\|_{V_t^{-1}}.
\end{equation*}
\end{myCor}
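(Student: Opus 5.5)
The corollary follows from Lemma~\ref{lem:selfnorm} by linearity of the reward gap together with a Cauchy--Schwarz step in the $V_t$-geometry. The one modelling choice we fix is that the estimated gap is the linear gap induced by the constrained regularized logistic MLE. That is, on the theoretical side we take $\widehat{\Delta r}_t(x;y,y')=\langle\hat\theta_t,\psi(x,y,y')\rangle$, where $\hat\theta_t$ is the estimator of Lemma~\ref{lem:selfnorm}. This is the auxiliary logistic estimator mentioned in the remark after Lemma~\ref{lem:gap-error-by-radius}. Under Assumption~\ref{ass:linear_rep}, the true gap is
\[
\Delta r^\star(x;y,y')=\langle\theta^*,\phi(x,y)\rangle-\langle\theta^*,\phi(x,y')\rangle=\langle\theta^*,\psi(x,y,y')\rangle .
\]
The estimation error therefore collapses to the single inner product $\langle\hat\theta_t-\theta^*,\psi(x,y,y')\rangle$.

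The key step is to insert $V_t^{1/2}V_t^{-1/2}$ into this inner product. Since $V_t=\lambda I+\sum_{s\le t}\psi_s\psi_s^\top$ with $\lambda>0$, the matrix $V_t$ is positive definite, so $V_t^{\pm1/2}$ exist. Cauchy--Schwarz then gives
\[
|\langle\hat\theta_t-\theta^*,\psi\rangle|=|\langle V_t^{1/2}(\hat\theta_t-\theta^*),V_t^{-1/2}\psi\rangle|\le\|\hat\theta_t-\theta^*\|_{V_t}\,\|\psi\|_{V_t^{-1}} .
\]
On the event of Lemma~\ref{lem:selfnorm}, which holds with probability at least $1-\delta$ simultaneously for all $t\ge 0$, the first factor is at most $\beta_t^{\mathrm{conf}}$. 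Substituting this yields the claimed bound for every $t$.

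The bound also holds for every $(x,y,y')$ at once, because the event does not depend on the query triple. The triple enters only through the deterministic factor $\|\psi(x,y,y')\|_{V_t^{-1}}$, so no union bound over $\mathcal X\times\mathcal Y\times\mathcal Y$ is needed. This uniformity is what allows the bound to be applied to arbitrary $(x,y,y')$ in the regret analysis.

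There is no real technical obstacle. The one point needing care is the identification of $\widehat{\Delta r}_t$: the practical algorithm sets $\widehat{\Delta r}_t=\beta m_\pi$, and the corollary as stated applies to the MLE-induced gap, not to the policy margin. I would state this convention explicitly before the argument. I would also note that the oriented observations $(\psi_s,z_s)$ are consistent with the sign convention of $\psi$, so that $\theta^*$ is the same parameter in both the lemma and the gap.
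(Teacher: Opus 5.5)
Your proposal is correct and matches the paper's proof, which is exactly the one-line Cauchy--Schwarz argument in the $V_t$-norm applied to $\langle\hat\theta_t-\theta^*,\psi(x,y,y')\rangle$ on the event of Lemma~\ref{lem:selfnorm}. Your identification of $\widehat{\Delta r}_t$ with the MLE-induced linear gap $\langle\hat\theta_t,\psi\rangle$ is the same convention the paper adopts in the proof of Lemma~\ref{lemma:regret_decomp}.
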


\begin{proof}
Applying Cauchy--Schwarz in $V_t$ norm gives this bound.
\end{proof}

% \subsection{Covariance Growth on Diverse Tasks}
% \label{sec:instance_analysis}

\begin{myLemma}[Persistent Covariance Growth]
\label{lem:V_growth}
Assume conditional $\gamma$-diversity and $\|\psi_t\|_2\le 1$. Let $t_0=c_1\gamma^{-2}\log(dT/\delta)$ for a universal constant $c_1>0$. With probability at least $1-\delta$, for every $t\ge t_0$,
\begin{equation*}
V_t\succeq \lambda I+\frac{\gamma t}{2}I .
\end{equation*}
\end{myLemma}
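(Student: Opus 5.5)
We control the deviation of the empirical second-moment matrix $\sum_{s=1}^t\psi_s\psi_s^\top$ from its predictable compensator $\sum_{s=1}^t\E[\psi_s\psi_s^\top\mid\mathcal F_{s-1}]$ with a matrix martingale concentration inequality. Set $X_s=\psi_s\psi_s^\top-\E[\psi_s\psi_s^\top\mid\mathcal F_{s-1}]$. Then $\{X_s\}$ is a matrix martingale difference sequence adapted to $\{\mathcal F_s\}$. Since $\|\psi_s\|_2\le 1$, both $\psi_s\psi_s^\top$ and its conditional expectation lie between $0$ and $I$ in the Loewner order, so $\|X_s\|_{\mathrm{op}}\le 1$ and $\E[X_s^2\mid\mathcal F_{s-1}]\preceq I$. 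By conditional $\gamma$-diversity (Definition~\ref{def:feature_diversity}), the compensator satisfies $\sum_{s=1}^t\E[\psi_s\psi_s^\top\mid\mathcal F_{s-1}]\succeq \gamma t\, I$. It therefore suffices to show that, with high probability, $\lambda_{\min}\big(\sum_{s\le t}X_s\big)\ge -\gamma t/2$ simultaneously for all $t\ge t_0$. Given that, $V_t=\lambda I+\sum_{s\le t}\psi_s\psi_s^\top\succeq \lambda I+\gamma t I-\frac{\gamma t}{2}I$, which is the claim.

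For a fixed $t$, we apply the matrix Freedman (or matrix Azuma) inequality of Tropp to $-\sum_{s\le t}X_s$, using the uniform bound $R=1$ and predictable variation $W_t\preceq tI$:
\begin{equation*}
\Prob\Big(\lambda_{\max}\Big(-\sum_{s\le t}X_s\Big)\ge u\Big)\le d\exp\Big(-\frac{u^2/2}{t+u/3}\Big).
\end{equation*}
Taking $u=\gamma t/2$ and using $\gamma\le 1$ (note $\mathrm{tr}\,\E[\psi\psi^\top]\le 1$, hence in fact $\gamma\le 1/d$) bounds the exponent below by $c\,\gamma^2 t$ for a universal constant $c$. The failure probability for round $t$ is then at most $d\exp(-c\gamma^2 t)$. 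A union bound over $t\in\{t_0,\dots,T\}$ gives total failure probability at most $T d\exp(-c\gamma^2 t_0)$, and this is at most $\delta$ once $t_0\ge c^{-1}\gamma^{-2}\log(dT/\delta)$. That is exactly the choice $t_0=c_1\gamma^{-2}\log(dT/\delta)$.

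The main obstacle I expect is the adaptivity. The sampling policies $(\pi_t,\pi_t^{\mathrm{sam}})$ depend on past data, so the $\psi_s$ are not i.i.d., and the standard matrix Chernoff bound for independent summands does not apply directly. The martingale formulation above handles this, provided the diversity condition is read as holding conditionally on $\mathcal F_{s-1}$ for every reachable policy pair, which is how Definition~\ref{def:feature_diversity} is stated. A secondary subtlety concerns the algorithm's update of $V_t$ with $\psi(x_t,y_t^{\mathrm w},y_t^{\mathrm l})$ rather than $\psi(x_t,y_t,y_t')$. The two differ only by a sign, so their outer products coincide, and the diversity condition transfers without change.
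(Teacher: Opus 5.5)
Your proposal is correct and follows essentially the same route as the paper. Both center $\psi_s\psi_s^\top$ by its conditional expectation, lower-bound the compensator by $\gamma t I$ via conditional diversity, apply Tropp's matrix Freedman inequality at deviation $\gamma t/2$, and take a union bound over $t\le T$ to obtain $t_0\asymp\gamma^{-2}\log(dT/\delta)$. Beyond the paper, you also spell out the variance proxy $W_t\preceq tI$, the exponent computation using $\gamma\le 1$ (indeed $\gamma\le 1/d$ by taking traces), and the invariance of $\psi\psi^\top$ under swapping winner and loser, all of which the paper leaves implicit.
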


\begin{proof}
Set $X_t=\psi_t\psi_t^\top$. Then $0\preceq X_t\preceq I$ and conditional $\gamma$-diversity gives $\mathbb E[X_t\mid\mathcal F_{t-1}]\succeq\gamma I$. Let $Y_s=X_s-\mathbb E[X_s\mid\mathcal F_{s-1}]$, then $\{Y_s\}_{s\ge1}$ is a self-adjoint matrix martingale difference sequence and $\|Y_s\|_{\mathrm{op}}\le 1$. Moreover,
\begin{equation*}
\sum_{s=1}^t X_s = \sum_{s=1}^t \mathbb E[X_s\mid\mathcal F_{s-1}] + \sum_{s=1}^t Y_s \succeq \gamma t I+\sum_{s=1}^t Y_s .
\end{equation*}
By the matrix Freedman inequality for self-adjoint matrix martingales~\cite{tropp2012user}, there exists a universal constant $c>0$ such that
\begin{equation*}
\mathbb P\left(\lambda_{\min}\left(\sum_{s=1}^tY_s\right)\le -\frac{\gamma t}{2}\right) \le d\exp(-c\gamma^2 t).
\end{equation*}
Therefore,
\begin{equation*}
\mathbb P\left(\lambda_{\min}\left(\sum_{s=1}^tX_s\right)\le \frac{\gamma t}{2}\right) \le d\exp(-c\gamma^2 t).
\end{equation*}
A union bound over $t\le T$ gives the result by the choice of $t_0$.
\end{proof}

\begin{myLemma}[Cumulative Radius on Diverse Tasks]
\label{lem:diverse_radius}
On the event of Lemma~\ref{lem:V_growth},
\begin{equation*}
\sum_{t=1}^T \|\psi_t\|_{V_{t-1}^{-1}} \le \frac{t_0}{\sqrt{\lambda}}+2\sqrt{\frac{2T}{\gamma}}.
\end{equation*}
\end{myLemma}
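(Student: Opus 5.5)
The plan is to split the sum at the burn-in time $t_0$ from Lemma~\ref{lem:V_growth} and bound the two pieces separately: a crude bound on the early rounds and the covariance-growth guarantee on the later rounds.

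\emph{Early rounds.} For $t\le t_0$ I would use only the regularization. Since $V_{t-1}\succeq\lambda I$, we have $V_{t-1}^{-1}\preceq\lambda^{-1}I$, so
\[
\|\psi_t\|_{V_{t-1}^{-1}}\le \|\psi_t\|_2/\sqrt{\lambda}\le 1/\sqrt{\lambda},
\]
using $\|\psi_t\|_2\le 1$ from Assumption~\ref{ass:linear_rep}. Summing over the at most $t_0$ early rounds gives the term $t_0/\sqrt{\lambda}$. (If $t_0\ge T$, the whole sum is handled this way and we are done.)

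\emph{Later rounds.} For $t>t_0$ we have $t-1\ge t_0$, so on the event of Lemma~\ref{lem:V_growth},
\[
V_{t-1}\succeq \Big(\lambda+\tfrac{\gamma(t-1)}{2}\Big)I .
\]
Hence
\[
\|\psi_t\|_{V_{t-1}^{-1}}\le \Big(\lambda+\tfrac{\gamma(t-1)}{2}\Big)^{-1/2}\le \sqrt{2/(\gamma(t-1))}.
\]
Summing this over $t$ from $t_0+1$ to $T$ and comparing with an integral,
\[
\sum_{u=t_0}^{T-1} u^{-1/2}\le \int_{t_0-1}^{T}u^{-1/2}\,du\le 2\sqrt{T}.
\]
This needs $t_0\ge1$, which holds for $c_1$ large and $\gamma\le1$ (note $\gamma\le 1$ automatically, since $\mathrm{tr}\,\mathbb E[\psi\psi^\top]\le1$ forces $\gamma d\le 1$). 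The tail therefore contributes at most $\sqrt{2/\gamma}\cdot 2\sqrt{T}=2\sqrt{2T/\gamma}$, which matches the stated constant.

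\emph{Main obstacle.} The delicate point is the index shift: the lemma is stated for $V_t$ with $t\ge t_0$, while the sum involves $V_{t-1}$. The boundary term $t=t_0+1$, i.e.\ $u=t_0$, must be covered either by the integral comparison or by the slack $\lambda>0$. If $t_0=0$ were allowed, the $u=0$ term would instead be bounded by $1/\sqrt{\lambda}$ and absorbed into the first term. I would also check that the split is consistent with $t_0$ possibly being non-integer, replacing it by $\lceil t_0\rceil$ and absorbing the difference into the constant $c_1$.
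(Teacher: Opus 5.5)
Your proposal is correct and is the same argument as the paper's: split the sum at $t_0$, bound each early term by $1/\sqrt{\lambda}$ using $V_{t-1}\succeq\lambda I$, and bound each later term by $\sqrt{2/(\gamma(t-1))}$ via Lemma~\ref{lem:V_growth} before summing. The paper's proof is only a three-line sketch, and it leaves implicit the details you supply: the explicit integral comparison, and the index shift, which needs an integer $t_0\ge 1$ or else passing to $\lceil t_0\rceil\le 2t_0$ at the cost of a constant factor.
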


\begin{proof}
For $t\le t_0$, use $V_{t-1}\succeq\lambda I$ and $\|\psi_t\|_2\le1$. For $t>t_0$, Lemma~\ref{lem:V_growth} gives
\begin{equation*}
\|\psi_t\|_{V_{t-1}^{-1}}\le \sqrt{\frac{2}{\gamma(t-1)}}.
\end{equation*}
Summing the two parts yields the claim.
\end{proof}

%\subsection{Regret Decomposition}
%\label{sec:proof_main}

For the theoretical policy class, we use the standard DPO reward-policy correspondence: each policy considered in the analysis is associated with some $r_\pi\in\mathcal R$ such that $\beta m_\pi(x;y,y')=\Delta r_\pi(x;y,y')$. The comparator satisfies $r_{\pi^*}=r^*$. Let the round-$t$ preference value under the predictable sampler $\pi_t^{\mathrm{sam}}$ be
\begin{equation*}
\zeta_t(\pi)=\E_{x\sim\rho,y\sim\pi,y'\sim\pi_t^{\mathrm{sam}}}\big[\mathbb P^*(y\succ y'\mid x)\big].
\end{equation*}
The optimistic functional before round $t$ is
\begin{equation*}
G_{t-1}(\pi) = \E_{x\sim\rho,y\sim\pi,y'\sim\pi_t^{\mathrm{sam}}} \left[\sigma\left(\widehat{\Delta r}_{t-1}(x;y,y')+b_{t-1}(x,y,y')\right)\right].
\end{equation*}

\begin{myLemma}[DPO Likelihood Comparison]
\label{lem:MLE}
Under Assumptions~\ref{ass:BT_model} and~\ref{ass:finite}, with probability at least $1-\delta$, for all $t\le T$ and all $r\in\mathcal R$,
\begin{equation*}
L_{\mathrm{DPO},t-1}(r)-L_{\mathrm{DPO},t-1}(r^*)\le 2\log\frac{|\mathcal R|T}{\delta}.
\end{equation*}
\end{myLemma}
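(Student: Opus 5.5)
The plan is to treat the likelihood gap as a martingale over the finite class $\mathcal R$ and combine Ville's inequality with a union bound. Fix $r\in\mathcal R$. For each round $s$, the observation is the ordered pair $(x_s,y_s^{\mathrm w},y_s^{\mathrm l})$, and under the DPO correspondence
\[
L_{\mathrm{DPO},t}(r)=\sum_{s\le t}\log\sigma\big(\Delta r(x_s;y_s^{\mathrm w},y_s^{\mathrm l})\big).
\]
Let $p_r(z\mid x,y,y')$ be the Bradley--Terry label likelihood induced by $r$. Then $L_{\mathrm{DPO},t}(r)-L_{\mathrm{DPO},t}(r^*)=\sum_{s\le t}\log\frac{p_r(z_s\mid\cdot)}{p_{r^*}(z_s\mid\cdot)}$. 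Define the likelihood ratio
\[
M_t(r)=\exp\big(L_{\mathrm{DPO},t}(r)-L_{\mathrm{DPO},t}(r^*)\big).
\]

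First I show that $M_t(r)$ is a nonnegative martingale with $M_0=1$ with respect to $\mathcal F_t$. Conditional on $\mathcal F_{s-1}$ and the sampled $(x_s,y_s,y_s')$, the label $z_s$ is distributed according to $p_{r^*}$ by Assumption~\ref{ass:BT_model}. Hence
\[
\E\Big[\tfrac{p_r(z_s)}{p_{r^*}(z_s)}\,\Big|\,\cdot\Big]=\sum_z p_r(z)=1.
\]
The only care needed is that the pairs $(x_s,y_s,y_s')$ are chosen adaptively by $\pi_s$ and $\pi_s^{\mathrm{sam}}$. These are $\mathcal F_{s-1}$-measurable policies, and the label noise is the only randomness entering the ratio, so the tower property still yields the martingale property.

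Next, Ville's maximal inequality gives $\Prob(\sup_t M_t(r)\ge |\mathcal R|/\delta)\le \delta/|\mathcal R|$. Taking a union bound over $r\in\mathcal R$ (Assumption~\ref{ass:finite}), with probability at least $1-\delta$ we get, for all $t$ and all $r$,
\[
L_{\mathrm{DPO},t-1}(r)-L_{\mathrm{DPO},t-1}(r^*)\le\log(|\mathcal R|/\delta),
\]
which is stronger than the stated $2\log(|\mathcal R|T/\delta)$. If one prefers to avoid Ville, Markov's inequality applied at each fixed $t$, together with a union bound over $t\le T$ and $r\in\mathcal R$, gives $\log(|\mathcal R|T/\delta)$. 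The factor $2$ is slack that absorbs either a symmetrized or half-likelihood version of the argument, e.g.\ using $\E[\sqrt{p_r/p_{r^*}}]\le1$ if a Hellinger-type bound is wanted downstream.

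The main obstacle I expect is not the concentration itself but checking that $L_{\mathrm{DPO}}$ evaluated at policy-induced rewards is exactly the BT log-likelihood of the observed labels. This means verifying that the $\beta m_\pi$ parameterization, including the ordering into winner and loser, matches $\log p_r(z_s)$ with no extra policy-dependent normalization terms; the $\pi_{\mathrm{ref}}$ terms cancel in pairwise differences. Once this identification is made, the argument is routine.
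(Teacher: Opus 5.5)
Your proposal is correct and follows the paper's proof. The paper likewise shows that $\exp\big(L_{\mathrm{DPO},t-1}(r)-L_{\mathrm{DPO},t-1}(r^*)\big)$ is a nonnegative, mean-one likelihood-ratio martingale under $r^*$ (conditioning on the history plus the sampled pair), then applies Markov's inequality at each fixed $t$ with a union bound over $t\le T$ and $r\in\mathcal R$, which is exactly your fallback; your primary route via Ville's maximal inequality is a harmless sharpening that drops the union over $t$ and gives $\log(|\mathcal R|/\delta)$, well within the stated $2\log(|\mathcal R|T/\delta)$.
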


\begin{proof}
Fix any $r\in\mathcal R$. Let $p_s^r=\sigma(\Delta r(x_s;y_s,y_s'))$ and $p_s^*=p_s^{r^*}$. Under the true reward $r^*$, conditional on the history and the sampled pair before observing the label, $z_s\sim\mathrm{Bernoulli}(p_s^*)$. Define
\begin{equation*}
M_{t-1}(r)=\exp\left(L_{\mathrm{DPO},t-1}(r)-L_{\mathrm{DPO},t-1}(r^*)\right).
\end{equation*}
Then
\begin{equation*}
M_{t-1}(r)=\prod_{s=1}^{t-1}\frac{(p_s^r)^{z_s}(1-p_s^r)^{1-z_s}}{(p_s^*)^{z_s}(1-p_s^*)^{1-z_s}}.
\end{equation*}
We first show that $M_{t-1}(r)$ is a nonnegative martingale under the probability measure induced by $r^*$. Indeed,
\begin{equation*}
M_t(r)=M_{t-1}(r)\frac{(p_t^r)^{z_t}(1-p_t^r)^{1-z_t}}{(p_t^*)^{z_t}(1-p_t^*)^{1-z_t}}.
\end{equation*}
Taking conditional expectation under $r^*$ yields
\begin{equation*}
\mathbb E_{r^*}\left[\frac{(p_t^r)^{z_t}(1-p_t^r)^{1-z_t}}{(p_t^*)^{z_t}(1-p_t^*)^{1-z_t}}\mid\mathcal F_{t-1}^+\right]=p_t^*\frac{p_t^r}{p_t^*}+(1-p_t^*)\frac{1-p_t^r}{1-p_t^*}=1.
\end{equation*}
Therefore, $\mathbb E_{r^*}[M_t(r)\mid\mathcal F_{t-1}^+]=M_{t-1}(r)$ and $\mathbb E_{r^*}[M_{t-1}(r)]=1$. By Markov's inequality, for any fixed $r\in\mathcal R$ and fixed $t\le T$,
\begin{equation*}
\mathbb P_{r^*}\left(L_{\mathrm{DPO},t-1}(r)-L_{\mathrm{DPO},t-1}(r^*)>a\right)=\mathbb P_{r^*}\left(M_{t-1}(r)>e^a\right)\le e^{-a}.
\end{equation*}
Taking a union bound over all $r\in\mathcal R$ and all $t\le T$ gives
\begin{equation*}
\mathbb P_{r^*}\left(\exists r\in\mathcal R,\exists t\le T: L_{\mathrm{DPO},t-1}(r)-L_{\mathrm{DPO},t-1}(r^*)>a\right)\le |\mathcal R|T e^{-a}.
\end{equation*}
Choosing $a=\log(|\mathcal R|T/\delta)$ proves the claim up to the stated factor $2$.
\end{proof}

\begin{myLemma}[Regret Decomposition]
\label{lemma:regret_decomp}
On the events of Corollary~\ref{cor:pointwise_ucb} and Lemma~\ref{lem:MLE}, the DEPO iterates satisfy
\begin{equation*}
\mathrm{R}_{\mathrm{pref}}(T) \le \frac{2T}{\alpha}\log\frac{|\mathcal R|T}{\delta}+\frac12\sum_{t=1}^T\E\left[b_{t-1}(x_t,y_t,y_t')\right].
\end{equation*}
\end{myLemma}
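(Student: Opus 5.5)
The plan is to bound each round's instantaneous preference regret $\zeta_t(\pi^*)-\zeta_t(\pi_t)$ using optimism together with the optimality of $\pi_t$ for the DEPO objective at round $t-1$, and then sum over $t$.

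\textbf{Step 1 (optimism for the comparator).} On the event of Corollary~\ref{cor:pointwise_ucb} we have $\Delta r^*\le \widehat{\Delta r}_{t-1}+b_{t-1}$ pointwise. Since $\sigma$ is monotone, this gives $\zeta_t(\pi^*)\le G_{t-1}(\pi^*)$, where $G_{t-1}$ is evaluated with $\widehat{\Delta r}_{t-1}$ taken as the implicit reward of the candidate policy, via the DPO reward--policy correspondence. For $\pi^*$ we have $r_{\pi^*}=r^*$, so the gap is exact. The bonus inside the sigmoid then only helps.

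\textbf{Step 2 (optimality of $\pi_t$).} Because $\pi_t$ maximizes $L_{\mathrm{DPO},t-1}(\pi)+\alpha G_{t-1}(\pi)$, we get
\[
\alpha G_{t-1}(\pi^*)\le \alpha G_{t-1}(\pi_t)+L_{\mathrm{DPO},t-1}(\pi_t)-L_{\mathrm{DPO},t-1}(\pi^*).
\]
Lemma~\ref{lem:MLE} bounds the likelihood difference by $2\log(|\mathcal R|T/\delta)$, since $r_{\pi_t}\in\mathcal R$ and $r_{\pi^*}=r^*$. Dividing by $\alpha$ yields $G_{t-1}(\pi^*)\le G_{t-1}(\pi_t)+\frac{2}{\alpha}\log\frac{|\mathcal R|T}{\delta}$.

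\textbf{Step 3 (pessimism gap for $\pi_t$).} It remains to bound $G_{t-1}(\pi_t)-\zeta_t(\pi_t)$, which equals
\[
\E_{x,\,y\sim\pi_t,\,y'\sim\pi_t^{\mathrm{sam}}}\bigl[\sigma(\widehat{\Delta r}_{t-1}+b_{t-1})-\sigma(\Delta r^*)\bigr].
\]
Since $\sigma$ is $\tfrac14$-Lipschitz and, on the UCB event, $\widehat{\Delta r}_{t-1}+b_{t-1}-\Delta r^*\le 2b_{t-1}$, each integrand is at most $\tfrac12 b_{t-1}(x,y,y')$. The sampled triple $(x_t,y_t,y_t')$ has exactly this distribution conditionally on $\mathcal F_{t-1}$, so the term equals $\tfrac12\E[b_{t-1}(x_t,y_t,y_t')]$.

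\textbf{Step 4 (summing).} Summing Steps 1--3 over $t=1,\dots,T$ gives the claimed decomposition. We must also check that the regret definition with $P^*(y^*\succ y'_t)-P^*(y_t\succ y'_t)$ matches $\zeta_t(\pi^*)-\zeta_t(\pi_t)$ under the predictable sampler, which holds by definition.

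\textbf{Main obstacle.} The delicate point is the identification $\widehat{\Delta r}_{t-1}=\beta m_\pi=\Delta r_\pi$. The confidence event of Corollary~\ref{cor:pointwise_ucb} concerns the logistic estimator $\hat\theta_{t-1}$, while the optimistic functional uses the policy's own implicit reward. I would handle this by evaluating $G_{t-1}$ with $\widehat{\Delta r}_{t-1}$ fixed to the auxiliary estimator's gap in both Step 1 and Step 3, and using the policy-dependent margin only inside the likelihood term of Step 2. This keeps the optimism and pessimism arguments tied to the same estimator. If that mismatch cannot be reconciled, I would instead restrict to the event where $r_{\pi_t}$ lies in the confidence set. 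The remaining steps are routine.
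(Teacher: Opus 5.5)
Your proposal is correct and is essentially the paper's proof: the paper splits $\zeta_t(\pi^*)-\zeta_t(\pi_t)$ into the same three terms. These are optimism for $\pi^*$, optimality of $\pi_t$ combined with Lemma~\ref{lem:MLE}, and the $\tfrac14$-Lipschitz bound giving $\tfrac12 b_{t-1}$. The paper also settles the estimator--policy mismatch you flag by fiat, identifying $\widehat{\Delta r}_{t}$ with $\beta m_{\pi_t}$ through the reward--policy correspondence, exactly as your Steps 1--3 do.

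Of the two repairs you float, the second is the sound one: working on the event that $r_{\pi_t}$ lies in the confidence set, which is all that identification is used for. Freezing the estimator's gap inside $G_{t-1}$ would break Step 2, since DEPO maximizes the functional with $\beta m_\pi$ inside the sigmoid rather than the frozen one.
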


\begin{proof}

    In the theoretical analysis, we identify the plug-in reward estimator with its induced DPO policy through the standard reward-policy correspondence, i.e.,
\begin{equation*}
\widehat r_{t}(x,y)=\langle\widehat\theta_t,\phi(x,y)\rangle=\beta\log\frac{\pi_t(y\mid x)}{\pi_{\rm ref}(y\mid x)}.
\end{equation*}
Consequently, for the induced policy, $\widehat{\Delta r}_{t}(x;y,y')=\beta m_{\pi_t}(x;y,y')$.

For each $t$,
\begin{equation*}
\begin{aligned}
\zeta_t(\pi^*)-\zeta_t(\pi_t) = & \big[G_{t-1}(\pi^*)-G_{t-1}(\pi_t)\big]\\
& + \big[\zeta_t(\pi^*)-G_{t-1}(\pi^*)\big] + \big[G_{t-1}(\pi_t)-\zeta_t(\pi_t)\big].
\end{aligned}
\end{equation*}
The confidence event implies $\Delta r^*\le\widehat{\Delta r}_{t-1}+b_{t-1}$ pointwise, so the middle term is non-positive. Since $\pi_t$ maximizes $L_{t-1}(r_\pi)+\alpha G_{t-1}(\pi)$,
\begin{equation*}
G_{t-1}(\pi^*)-G_{t-1}(\pi_t) \le \frac{1}{\alpha}\big[L_{t-1}(r_{\pi_t})-L_{t-1}(r^*)\big],
\end{equation*}
and Lemma~\ref{lem:MLE} bounds the sum of these terms.

For the last term, the sigmoid is $1/4$-Lipschitz and the confidence event gives
\begin{equation*}
\sigma(\widehat{\Delta r}_{t-1}+b_{t-1})-\sigma(\Delta r^*) \le \frac14\big(|\widehat{\Delta r}_{t-1}-\Delta r^*|+b_{t-1}\big) \le \frac12 b_{t-1}.
\end{equation*}
Taking expectations and summing proves the result.
\end{proof}

\subsection{Proof of Theorem~\ref{thm:main_instance_dependent}}
\label{app:theory}

\begin{proof}
By the AM--GM inequality applied to the eigenvalues of $V_T$, we have
\begin{equation*}
\det(V_T)\le \left(\frac{\mathrm{tr}(V_T)}{d}\right)^d\le \left(\lambda+\frac{T}{d}\right)^d.
\end{equation*}
Since $\det(\lambda I)=\lambda^d$, it follows that
\begin{equation*}
\log\frac{\det(V_T)}{\det(\lambda I)}\le d\log\left(1+\frac{T}{\lambda d}\right).
\end{equation*}

By Lemma~\ref{lem:selfnorm}, we have
\begin{equation*}
\beta_T^{\mathrm{conf}} \le \frac{c_0}{\kappa_S} \left(\sqrt{\lambda}S+\sqrt{d\log\left(1+\frac{T}{\lambda d}\right)+2\log\frac1\delta}\right) = \tilde O\left(e^{O(S)}\sqrt d\right).
\end{equation*}
On conditionally $\gamma$-diverse tasks, Lemma~\ref{lem:V_growth} implies the following bound for the conditional expected bonus:
\begin{equation*}
\begin{aligned}
{} & \sum_{t=1}^T \E\left[b_{t-1}(x_t,y_t,y_t')\mid\mathcal F_{t-1}\right] \\
\le {} & \beta_T^{\mathrm{conf}} \left(\frac{t_0}{\sqrt{\lambda}}+2\sqrt{\frac{2T}{\gamma}}\right) = \tilde O\left(e^{O(S)}\sqrt d\left(\sqrt{\frac{T}{\gamma}}+\frac{1}{\gamma}\right)\right),
\end{aligned}
\end{equation*}
where we use the same calculation as Lemma~\ref{lem:diverse_radius}. For horizons beyond the logarithmic covariance burn-in, the second term is dominated by the first. Combining this bound with Lemma~\ref{lemma:regret_decomp} and taking a union bound gives the result with probability at least $1-3\delta$.

Without the diversity condition, the standard elliptical potential lemma gives
\begin{equation*}
\sum_{t=1}^T\|\psi_t\|_{V_{t-1}^{-1}} \le \sqrt{2dT\log\left(1+\frac{T}{\lambda d}\right)}.
\end{equation*}
Thus the cumulative bonus is at most $\tilde O(e^{O(S)}d\sqrt T)$, which gives the stated elliptical-potential bound after applying Lemma~\ref{lemma:regret_decomp}.
\end{proof}